\documentclass[11pt]{article}

\usepackage{acl}
\usepackage{times}
\usepackage{latexsym}
\usepackage[T1]{fontenc}
\usepackage[utf8]{inputenc}
\usepackage{microtype}
\usepackage{inconsolata}
\usepackage{graphicx}
\usepackage[most]{tcolorbox}     
\usepackage[dvipsnames]{xcolor}  

\newtcolorbox[list inside=prompt,auto counter,number within=section]{prompt}[1][]{
    colbacktitle=black!60,
    coltitle=white,
    fontupper=\footnotesize,
    boxsep=5pt,
    enhanced,
    left=0pt,
    right=0pt,
    top=0pt,
    bottom=0pt,
    boxrule=1pt,
    breakable,
    #1
}
\usepackage{amsmath,amssymb,amsthm}
\usepackage{mathtools}
\usepackage{bm}
\usepackage{bbm}
\usepackage{booktabs}
\usepackage{multirow}
\usepackage{threeparttable}
\usepackage{tabularx}
\usepackage{xcolor}
\usepackage{colortbl}
\usepackage{enumitem}
\usepackage{subcaption}
\usepackage{titletoc}
\usepackage{algorithm}
\usepackage{algpseudocode}
\usepackage{algorithmicx}

\definecolor{tigerblue}{RGB}{218,232,252}

\newtheorem{theorem}{Theorem}[section]
\newtheorem{proposition}[theorem]{Proposition}

\newtheorem{assumption}[theorem]{Assumption}

\theoremstyle{remark}

\theoremstyle{definition}

\newcommand{\method}{TIGER}

\title{Self-Correction Can Amplify Hallucinations: Fact-Level Repair with Graph-Based Evidence Routing in Multimodal Generation}
\author{
  \normalfont\normalsize
  Kaixiang Zhao\textsuperscript{1} \quad
  Tianrun Yu\textsuperscript{1} \quad
  Shawn Huang\textsuperscript{1} \\[2pt]
  \normalfont\normalsize
  Porter Jenkins\textsuperscript{1} \quad
  Yushun Dong\textsuperscript{2,*} \quad
  Amanda Hughes\textsuperscript{1,*} \\[6pt]
  \textsuperscript{1}Brigham Young University \quad
  \textsuperscript{2}Florida State University \\[4pt]
  \small\texttt{\{kzhao2@byu.edu, yushun.dong@fsu.edu, amanda\_hughes@byu.edu\}} \\[2pt]
  \small\textsuperscript{*}Corresponding authors
}

\begin{document}
\maketitle

\begin{abstract}
We study fact-level repair for multimodal generation, where a fluent output may contain specific facts that are not supported by the input. Existing inference-time repair methods often generate feedback by jointly conditioning on the input and the current output. This design has two limitations: hallucinated claims in the output can bias the model's interpretation of the input, and free-form feedback cannot be ranked or scheduled at the fact level. We present \method{}\footnote{Code released at \url{https://github.com/kzhao5/tiger}}, an inference-time framework that redesigns feedback for localized repair. \method{} independently extracts an observation graph from the input and a claim graph from the current output, then assigns each claim a graph-conditioned risk score based on support and conflict. The model repairs selected high-risk claims while keeping the backbone frozen. We provide a convergence analysis showing that the expected total risk decreases geometrically to an explicit asymptotic bound under mild assumptions. Experiments across four cross-modal paths, including image$\to$text, image+text$\to$text, audio$\to$text, and video$\to$text, show that \method{} reduces unsupported content while preserving task quality. The gains hold across multiple backbones, and a CrisisFACTS case study suggests that the same repair mechanism can improve grounding in multi-source settings.

\end{abstract}

\section{Introduction}
\label{sec:introduction}

Recent unified multimodal language models support cross-modal generation with inputs from text, images, audio, and video~\citep{wu2023next,lu2023unified,chameleon2024,anygpt2024,janus2024,januspro2025,qwen25omni2025,bagel2025,mmada2025}. This capability is useful for applications such as content creation, multimodal decision support, disaster response, medical triage, and journalism. In this paper, we focus on factual generation with textual outputs, where the input may contain one or more modalities. A key challenge in this setting is \emph{faithfulness hallucination}, where the generated response contains facts that are not supported by the input~\citep{crossmodalconsistency2024,samecontent2025}. The response is often fluent, and most of its content may be correct, but a small number of specific facts can contradict the evidence. For example, an image with two trucks may be described as containing three trucks, an intact bridge may be described as collapsed, or ``light rain'' in speech may be summarized as a ``heavy storm.'' These errors are local, but they can be harmful because downstream users may trust the generated content. Section~\ref{sec:motivation} shows that such errors can arise from spurious correlations during joint autoregressive generation. \emph{Our goal is to identify and repair unsupported facts at inference time on a frozen backbone, while preserving correct content.}

\begin{figure*}[t]
\centering
\includegraphics[width=\textwidth]{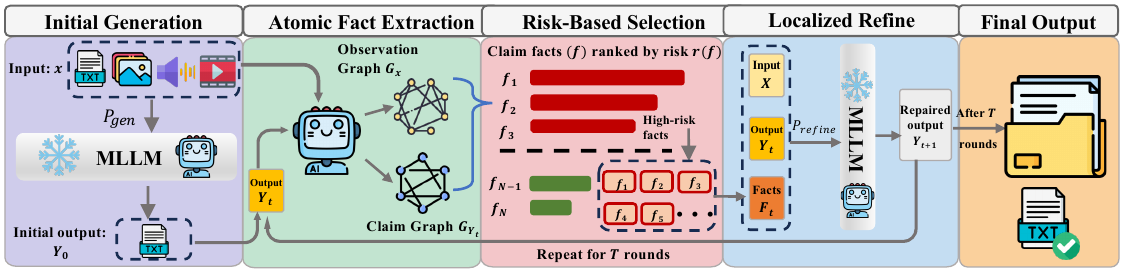}
\caption{\textbf{Overview of TIGER.} TIGER first generates an initial output, extracts fact graphs from the input and output, ranks claims by risk, and locally repairs selected high-risk facts while keeping the backbone frozen.}
\label{fig:tiger_pipeline}
\end{figure*}

Iterative repair with feedback is a natural way to address this problem. In this setting, the model revises its current output by conditioning on the input, the current output, and a feedback signal. Existing multimodal self-correction methods, including Volcano~\citep{lee2024volcano}, Woodpecker~\citep{yin2024woodpecker}, and DeGF~\citep{degf2025}, follow this pattern; see Appendix~\ref{sec:related}. However, these methods generate feedback by jointly conditioning on the input and the current output. This design creates a failure mode: the feedback generator sees the hallucinated claims in the current output, so these claims can influence how the model interprets the input. As a result, the feedback may endorse unsupported content instead of flagging it~\citep{sun2026friendly,li2023evaluating,chen2026learning,yu2026provable}. Free-form feedback also provides no explicit score for each claim, which makes it difficult to decide which facts should be repaired under a limited compute budget. These limitations motivate our question:

\begin{center}
\emph{Can we redesign the feedback mechanism in iterative multimodal repair so that it reduces spurious correlation during feedback generation and supports fact-level scheduling?}
\end{center}

We answer this question with \textbf{TIGER}
(Fac\textbf{t}-level repa\textbf{i}r with \textbf{G}raph-based
\textbf{E}vidence \textbf{R}outing), an inference-time framework
that operates on a frozen backbone.
As shown in Figure~\ref{fig:tiger_pipeline}, TIGER redesigns feedback in two stages. First, it applies \emph{atomic projection}: the input and the current output are extracted independently into structured fact graphs. This separation reduces the direct channel through which unsupported claims in the output can influence the model's reading of the input. Second, TIGER computes a \emph{deterministic fact-level risk} for each output claim by measuring its support and conflict against the input graph. These scores make the feedback rankable, so TIGER can select a small set of high-risk facts and repair them locally. We also provide a convergence analysis which shows that the expected total risk decreases geometrically under mild assumptions.

We evaluate TIGER on four text-output cross-modal generation paths: image$\to$text, image+text$\to$text, audio$\to$text, and video$\to$text. The main experiments use Qwen2.5-Omni~\citep{qwen25omni2025} as the primary frozen backbone, with additional results on LLaVA-1.5 and other open-source and proprietary backbones. Across COCO, AMBER, MMHal-Bench, Clotho, and VideoHallucer, TIGER reduces unsupported claims while preserving or improving task quality. On external hallucination metrics such as CHAIR~\citep{rohrbach2018}, TIGER mitigates the common failure mode where repair improves fluency but introduces new unsupported objects. A case study on CrisisFACTS~\citep{crisisfacts2022} shows that TIGER can improve grounded situation reporting from noisy evidence collected from multiple sources.

\noindent\textbf{Our contributions} are summarized as follows:
\begin{itemize}[leftmargin=*]
    \item We identify a feedback-stage failure mode in iterative multimodal repair: when feedback is generated by jointly conditioning on the input and the current output, hallucinated claims in the output can bias the model's interpretation of the input.

    \item We propose \method{}, an inference-time repair framework that replaces free-form joint feedback with independent atomic projection and deterministic fact-level risk ranking. This design makes feedback explicit, rankable, and suitable for localized repair under a fixed compute budget.

    \item We evaluate \method{} across four text-output cross-modal paths: image$\to$text, image+text$\to$text, audio$\to$text, and video$\to$text. Experiments show that \method{} reduces hallucination while preserving task quality, generalizes across multiple backbones, and remains effective in a real-world crisis-reporting case study.
\end{itemize}



\section{A Motivating Observation}
\label{sec:motivation}

Figure~\ref{fig:spurious}~(top) illustrates a simple failure mode.
The input image shows a person wakeboarding on water, but no boat is visible.
With a free-form description prompt, Qwen2.5-Omni generates a caption that states the tow rope is attached to a boat.
With an atomic enumeration prompt and the same decoding setting, the model describes the tow rope as connected to an unseen source and does not introduce the absent boat.

Figure~\ref{fig:spurious}~(bottom) evaluates this effect distributionally.
We compute the co-occurrence hallucination rate (CHR), defined as the fraction of generations that mention the absent object $b$ when the cue $a$ is present, and show four representative pairs from the nine-pair probe set described in Appendix~\ref{app:exp:datasets}.
We sort the pairs by their COCO co-occurrence frequency $P(b \mid a)$.
Free-form generation has a higher CHR than atomic enumeration for every pair, and atomic enumeration reduces CHR by about $2.6\times$.
Prior work reports similar hallucination driven by co-occurrence on other backbones~\citep{datta2025evaluating,li2023evaluating}, which suggests that this behavior is not specific to one model.
A likely mechanism is that free-form generation conditions each token on the generated text, so early claims can activate training priors and lead to unsupported content~\citep{rohrbach2018,zhou2024analyzing,liu2026dual}.
Atomic enumeration constrains the model to produce short factual units, which reduces this generation path.
\begin{figure}[t]
  \centering
  \includegraphics[width=\columnwidth]{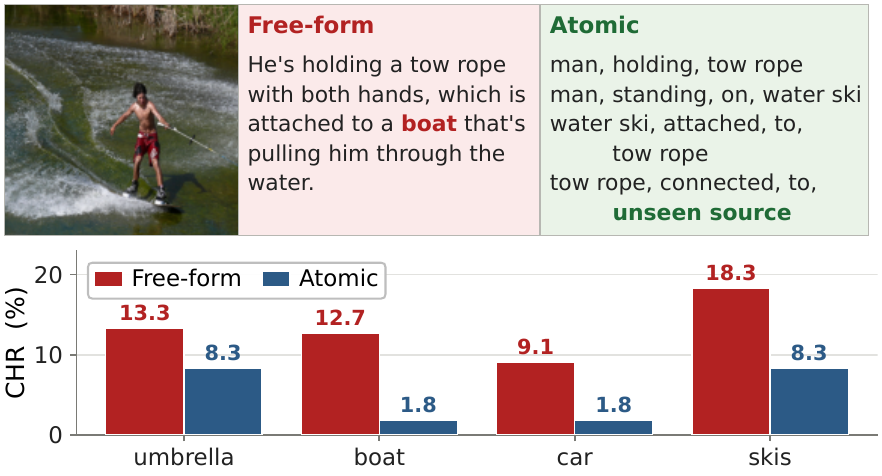}
  \caption{%
    \textbf{Free-form generation vs.\ atomic enumeration.}
    Top: a qualitative example where free-form generation adds an unsupported object, while atomic enumeration avoids it. Bottom: co-occurrence hallucination rate (CHR) across cue-to-absent object pairs.%
  }
  \label{fig:spurious}
  \vspace{-10pt}
\end{figure}

This observation motivates TIGER.
If spurious correlation enters through joint generation, a feedback mechanism that also relies on joint generation can inherit the same bias.
The next section uses this observation to redesign the feedback loop with independent fact extraction and risk-based repair.

\section{Methodology}
\label{sec:method}

\subsection{Problem Setup}
\label{sec:setup}

We consider a multimodal-to-text generation task where the input
$\mathbf{X}\in\mathcal{M}_X$ may be image, text, audio, or video, and
the output $\mathbf{Y}\in\mathcal{T}$ is textual. Given a task prompt
$\mathcal{P}_{\text{gen}}$, a multimodal model $\Phi$ realizes
\begin{equation}
\label{eq:single-shot}
\mathbf{Y}=\Phi(\mathcal{P}_{\text{gen}},\mathbf{X}).
\end{equation}
As shown in Section~\ref{sec:motivation}, single-shot generation can
inject spurious correlations into $\mathbf{Y}$ through joint
autoregressive decoding. Prior work often uses an iterative repair form:
\begin{equation}
\label{eq:iteration}
\mathbf{Y}_0=\Phi(\mathcal{P}_{\text{gen}},\mathbf{X}),\qquad
\mathbf{Y}_{t+1}=\Phi(\mathcal{P}_{\text{refine}},\mathbf{X},
\mathbf{Y}_t,\mathcal{F}_t),
\end{equation}
where $\mathcal{F}_t$ indicates what in $\mathbf{Y}_t$ needs correction.
A common critique-then-revise instantiation, such as Volcano, generates
natural-language feedback by jointly conditioning on the input and the
current output:
\begin{equation}
\label{eq:naive-feedback}
\mathcal{F}_t=\Phi(\mathcal{P}_{\text{fb}},\mathbf{X},\mathbf{Y}_t).
\end{equation}
Other correction methods use different response-conditioned signals,
such as tool-based visual validation in Woodpecker and generative visual
feedback in DeGF. However, existing methods do not independently
construct an observation graph from $\mathbf{X}$ and a claim graph from
$\mathbf{Y}_t$, nor do they compute deterministic per-claim risk scores.

This creates two limitations. First, critique feedback in
Eq.~\eqref{eq:naive-feedback} can inherit spurious correlations because
the feedback generator observes $\mathbf{Y}_t$, so unsupported claims may
be endorsed rather than flagged. 
Second, existing feedback or correction signals are not explicit
fact-level risk scores, so there is no direct way to rank claims or
to select them for repair under a compute budget.

\subsection{Redesigning \texorpdfstring{$\mathcal{F}_t$}{Ft}}
\label{sec:redesign}

We address these limitations by replacing Eq.~\eqref{eq:naive-feedback}
with two changes. First, $\mathbf{X}$ and $\mathbf{Y}_t$ are independently
projected into fact graphs:
\begin{equation}
\label{eq:extract}
G_{\mathbf{X}}=\Phi(\mathcal{P}_{\text{ext}},\mathbf{X}),\qquad
G_{\mathbf{Y}_t}=\Phi(\mathcal{P}_{\text{ext}},\mathbf{Y}_t).
\end{equation}
Second, feedback is replaced by a graph-conditioned selection operator:
\begin{equation}
\label{eq:tiger-feedback}
\mathcal{F}_t=\Psi_\alpha(G_{\mathbf{X}},G_{\mathbf{Y}_t}).
\end{equation}
Input isolation in Eq.~\eqref{eq:extract} mitigates the conditioning
bias, while $\alpha$-budgeted selection in $\Psi_\alpha$ enables
fact-level scheduling. The refine step in Eq.~\eqref{eq:iteration}
remains stochastic.

\noindent\textbf{Atomic extraction.}
The prompt $\mathcal{P}_{\text{ext}}$ asks $\Phi$ to enumerate
observable facts as triples $(e_1, \text{rel}, e_2)$, where $e_1$,
$e_2$ are head and tail entities and $\text{rel}$ is drawn from a
fixed relation vocabulary (Appendix~\ref{app:prompts}).
Each fact is generated within the boundary of a single triple, encouraging $\Phi$
to operate in local context and reducing the co-occurrence channel
documented in Section~\ref{sec:motivation}.
The input graph $G_{\mathbf{X}}$ and the output graph
$G_{\mathbf{Y}_t}$ are extracted in complete isolation.
$G_{\mathbf{X}}$ is computed once before any $\mathbf{Y}$ exists and remains fixed,
so errors in $\mathbf{Y}_t$ cannot directly affect the extraction of input facts,
blocking the direct conditioning path. Facts that share the same entity are connected by
coreference edges, so each graph carries both node-level and
edge-level structure.

\noindent\textbf{Risk-based selection $\Psi_\alpha$.}
For each claim $f \in G_{\mathbf{Y}_t}$, $\Psi_\alpha$ computes a
support score and a conflict score. The similarity between two facts
is computed as the mean of per-field clipped cosine similarities using a
frozen sentence transformer~\citep{reimers2019sentencebert}. The
\emph{local support}
$s_0(f) = \max_{g \in G_{\mathbf{X}}} \mathrm{sim}(f, g)$ measures
the best match in the observation graph. To compensate for extraction
omissions, local support is propagated along coreference edges in
$G_{\mathbf{Y}_t}$ with geometric decay:
\begin{equation}
\label{eq:support}
s(f) = \max_{f' \in \{f\} \cup \mathcal{N}_K(f)}\;
       \gamma^{\,d(f,\,f')} \cdot s_0(f'),
\end{equation}
where $\mathcal{N}_K(f)$ is the $K$-hop coreference neighborhood,
$d(f, f')$ is the hop distance, and $\gamma \in (0, 1)$ is a decay
factor. This propagation allows facts about the same entity to
reinforce each other, reducing false positives caused by extraction
omissions.
The \emph{conflict score} measures the strongest contradiction
between a claim and the observation graph. The function
$\mathrm{conflict}(f, g)$ is defined as the product of topic
consistency (how well subjects and predicates match) and conclusion
divergence (how much the objects differ), which acts as a soft gate:
conflict is significant only when two facts discuss the same topic
but reach different conclusions (Appendix~\ref{app:risk}). The
conflict score of a claim is:
\begin{equation}
\label{eq:conflict}
c(f) = \max_{g \in G_{\mathbf{X}}}\; \mathrm{conflict}(f, g).
\end{equation}

The risk combines lack of support and presence of conflict:
\begin{equation}
\label{eq:risk}
r(f) = (1 - s(f)) + \lambda \cdot c(f), \qquad \lambda > 0.
\end{equation}
Since the scoring components are deterministic, a fixed pair
$(G_{\mathbf{X}}, G_{\mathbf{Y}_t})$ yields the same per-claim
risk scores $r(f)$ for all $f \in G_{\mathbf{Y}_t}$. With
$N = |G_{\mathbf{Y}_t}|$, $\Psi_\alpha$ returns the
$\lceil \alpha N \rceil$ highest-risk facts:
\begin{equation}
\label{eq:H}
\Psi_\alpha(G_{\mathbf{X}}, G_{\mathbf{Y}_t})
\;=\;
\operatorname*{arg\,max}_{\substack{\mathcal{S} \,\subseteq\, G_{\mathbf{Y}_t} \\
                                    |\mathcal{S}| \,=\, \lceil \alpha N \rceil}}
\sum_{f \in \mathcal{S}} r(f).
\end{equation}
Each selected fact keeps its source position, so $\Phi$ repairs the
corresponding span locally. The full design and property verification
of $r(f)$ are given in Appendix~\ref{app:risk}.

\subsection{Algorithm and Convergence}
\label{sec:algorithm}

The complete iterative repair procedure is summarized in
Algorithm~\ref{alg:repair}.

\begin{algorithm}[t]
\caption{Iterative Fact-Level Repair}
\label{alg:repair}
\begin{algorithmic}[1]
\Require input $\mathbf{X}$; backbone $\Phi$; rounds $T$;
         budget $\alpha \in (0,1]$; risk weight $\lambda$
\Ensure repaired output $\mathbf{Y}_T$
\State $G_{\mathbf{X}} \gets \Phi(\mathcal{P}_{\text{ext}}, \mathbf{X})$
\State $\mathbf{Y}_0 \gets \Phi(\mathcal{P}_{\text{gen}}, \mathbf{X})$
\For{$t = 0, 1, \dots, T-1$}
    \State $G_{\mathbf{Y}_t} \gets \Phi(\mathcal{P}_{\text{ext}}, \mathbf{Y}_t)$
    \For{each fact $f \in G_{\mathbf{Y}_t}$}
        \State $s_0(f) \gets \max_{g \in G_{\mathbf{X}}}\ \mathrm{sim}(f, g)$
        \State $s(f) \gets \max_{f' \in \{f\} \cup \mathcal{N}_K(f)}\
            \gamma^{d(f,f')} \cdot s_0(f')$
        \State $c(f) \gets \max_{g \in G_{\mathbf{X}}}\ \mathrm{conflict}(f, g)$
        \State $r(f) \gets (1 - s(f)) + \lambda \cdot c(f)$
    \EndFor
    \State $\mathcal{F}_t \gets \text{top-}\lceil\alpha N\rceil \text{ facts of } G_{\mathbf{Y}_t} \text{ by } r(\cdot)$
    \State $\mathbf{Y}_{t+1} \gets \Phi(\mathcal{P}_{\text{refine}},\ \mathbf{X},\ \mathbf{Y}_t,\ \mathcal{F}_t)$
\EndFor
\State \Return $\mathbf{Y}_T$
\end{algorithmic}
\end{algorithm}

Let $R^{(t)} = \sum_{f \in G_{\mathbf{Y}_t}} r(f)$ denote the measured
total risk. Under explicit assumptions (bounded graph size, per-fact
repair progress $\varepsilon$, bounded side effects $\beta$, bounded
extraction loss $\xi$; see Appendix~\ref{app:proof}), the following
conditional bound holds.

\begin{theorem}[Conditional geometric risk bound]
\label{thm:convergence}
After $T$ rounds of repair, the expected measured risk satisfies
\begin{equation}
\label{eq:convergence}
\mathbb{E}\!\left[R^{(T)}\right]
\;\le\;
(1 - \alpha\varepsilon)^{T}\, R^{(0)}
\;+\;
\frac{\beta + \xi}{\alpha\varepsilon}.
\end{equation}
\end{theorem}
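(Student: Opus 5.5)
The argument reduces to a one-step contraction inequality for the conditional expectation of the measured risk, which is then unrolled. Let $\mathcal{H}_t$ denote the history up to round $t$, i.e.\ $\mathbf{X}$, $G_{\mathbf{X}}$, $\mathbf{Y}_0,\dots,\mathbf{Y}_t$ and the extracted graphs. The goal is the recursion
\begin{equation*}
\mathbb{E}\!\left[R^{(t+1)} \,\middle|\, \mathcal{H}_t\right] \;\le\; (1-\alpha\varepsilon)\,R^{(t)} + \beta + \xi .
\end{equation*}
Taking total expectations gives $a_{t+1} \le q\,a_t + (\beta+\xi)$ with $a_t = \mathbb{E}[R^{(t)}]$ and $q = 1-\alpha\varepsilon \in [0,1)$. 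Induction then yields
\begin{equation*}
a_T \le q^T a_0 + (\beta+\xi)\sum_{k=0}^{T-1} q^k \le q^T R^{(0)} + \frac{\beta+\xi}{\alpha\varepsilon},
\end{equation*}
where $R^{(0)}$ is treated as given (conditioning on $\mathbf{Y}_0$). The geometric sum is bounded by $1/(1-q)$.

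To establish the one-step inequality, I would decompose $R^{(t+1)}$ according to how the claims of $G_{\mathbf{Y}_{t+1}}$ arise from those of $G_{\mathbf{Y}_t}$. The claims split into three groups:
\begin{itemize}
\item[(i)] repaired versions of the selected set $\mathcal{F}_t$;
\item[(ii)] untouched claims in $G_{\mathbf{Y}_t}\setminus\mathcal{F}_t$;
\item[(iii)] newly introduced or re-extracted discrepancies.
\end{itemize}
Each group is controlled by one appendix assumption. The per-fact repair-progress assumption should say that, in conditional expectation, a selected fact's risk drops by a factor $\varepsilon$ relative to what it contributes. Aggregated over $\mathcal{F}_t$, this gives a decrease of at least $\varepsilon\sum_{f\in\mathcal{F}_t} r(f)$. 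The side-effect assumption bounds the expected added risk on unselected claims, together with any newly introduced claims, by $\beta$. The extraction-loss assumption bounds by $\xi$ the discrepancy between the risk of the true fact content and the risk of the re-extracted graph $G_{\mathbf{Y}_{t+1}}$. Because the scores are deterministic given the graphs, only the refine and extraction steps carry randomness, which is why the bound is stated in expectation. The bounded-graph-size assumption keeps every quantity finite and makes $\lceil \alpha N\rceil$ meaningful.

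The key combinatorial step converts the decrease over $\mathcal{F}_t$ into a fraction of the whole. Since $\Psi_\alpha$ selects the top $\lceil\alpha N\rceil$ claims by risk (Eq.~\eqref{eq:H}), an averaging argument shows that the top-$k$ sum is at least $\frac{k}{N}$ times the total sum. Hence
\begin{equation*}
\sum_{f\in\mathcal{F}_t} r(f) \;\ge\; \frac{\lceil\alpha N\rceil}{N}\,R^{(t)} \;\ge\; \alpha R^{(t)} .
\end{equation*}
This is exactly what produces the factor $\alpha\varepsilon$. It also needs $r\ge 0$, which holds because $s\in[0,1]$ (clipped cosine similarity, with decay $\gamma<1$), $c\ge 0$ and $\lambda>0$.

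The main obstacle is the bookkeeping between consecutive graphs. $G_{\mathbf{Y}_{t+1}}$ is a fresh extraction, so there is no canonical matching between its claims and those of $G_{\mathbf{Y}_t}$. Coreference propagation in Eq.~\eqref{eq:support} also couples the scores of neighboring claims, so repairing one fact can change the support of others. I would handle this by folding all such effects into $\beta$ and $\xi$ by definition. Concretely, I would state the assumptions relative to a (possibly partial) correspondence and charge unmatched claims and propagation-induced changes to the side-effect and extraction terms. I would then check that the resulting inequality is additive, so the contraction on $\mathcal{F}_t$ and the additive error terms combine cleanly.
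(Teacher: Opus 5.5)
Your proposal is correct and takes essentially the same route as the paper. The paper writes the identity $R^{(t+1)} = \sum_{f\in\mathcal{F}_t}\widetilde{r}^{(t+1)}(f) + \sum_{f\in G_{\mathbf{Y}_t}\setminus\mathcal{F}_t} r^{(t)}(f) + \Delta_t^{+} + \Delta_t^{\mathrm{ext}}$, with $\Delta_t^{\mathrm{ext}}$ defined as the residual (your ``fold the mismatch into $\xi$ by definition''), applies the progress, side-effect, and extraction assumptions, uses the top-$\lceil\alpha N_t\rceil$ averaging bound to get the drift $(1-\alpha\varepsilon)R^{(t)}+\beta+\xi$, and unrolls it with $\sum_j(1-\alpha\varepsilon)^j\le 1/(\alpha\varepsilon)$. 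Your two refinements are slightly more careful than the paper: you condition on the full history $\mathcal{H}_t$ rather than on $R^{(t)}$ alone, which is what makes $\mathcal{F}_t$ and the individual $r^{(t)}(f)$ fixed under the expectation, and you note that $R^{(t)}\ge 0$ is needed for $\lceil\alpha N\rceil R^{(t)}/N \ge \alpha R^{(t)}$.
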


\noindent The full proof is in Appendix~\ref{app:proof}. The asymptotic floor $(\beta+\xi)/(\alpha\varepsilon)$ reflects the
residual risk caused by side effects and extraction loss. Notably, because each refine
step accesses raw input $\mathbf{X}$ rather than only
$G_{\mathbf{X}}$, the repair loop can recover facts lost during
extraction, so $\mathbf{Y}_T$ can exceed the coverage of
$G_{\mathbf{X}}$. We verify this empirically in
Section~\ref{subsec:mechanism}.



\section{Experimental Evaluation}
\label{sec:experiments}

We structure our evaluation around six research questions.
\textbf{RQ1. Does \method{} reduce hallucination across cross-modal
generation paths?} We evaluate \method{} against ten baselines on
COCO, AMBER, MMHal-Bench, Clotho, and VideoHallucer, which cover
image$\to$text, image+text$\to$text, audio$\to$text, and
video$\to$text, and we report external hallucination and quality
metrics over three seeds.
\textbf{RQ2. Is each core component of \method{} necessary?} We
ablate iterative repair, atomic projection, and deterministic risk
ranking under the same refinement step, and we compare four feedback
levels from direct decoding to full \method{}.
\textbf{RQ3. How sensitive is \method{} to its main
hyperparameters?} We run one-at-a-time sweeps over the number of
repair rounds $T$, the batch budget $\alpha$, and the conflict
weight $\lambda$, and we report CHAIR$_s$ around the default
setting.
\textbf{RQ4. Why does \method{} work?} We probe
whether atomic projection reduces spurious correlation during
feedback generation, and whether the refinement step can recover
correct facts that are missing from the extracted input graph
$G_{\mathbf{X}}$.
\textbf{RQ5. Does \method{} transfer to real-world inputs?} We apply
\method{} to multi-source crisis reporting on CrisisFACTS, where the
input combines noisy text streams and images from five disaster
events, and we report gold-match precision, recall, and F1.
\textbf{RQ6. What does \method{} cost, and can the cost be reduced?}
We measure backbone calls and wall-clock time per sample against all
baselines, and we evaluate two lower-cost operating points: fewer
repair rounds and an early-stopping variant.

\subsection{Experimental Setup}
\label{subsec:setup}

\noindent\textbf{Datasets.}
We evaluate \method{} on five benchmarks that cover four cross-modal generation paths and four modalities:
(1)~\textbf{COCO Captions}~\cite{chen2015microsoft} for image$\to$text generation;
(2)~\textbf{MMHal-Bench}~\cite{mmhalbench2023} for image+text$\to$text generation;
(3)~\textbf{AMBER}~\cite{wang2023amber} for image$\to$text hallucination evaluation;
(4)~\textbf{Clotho}~\cite{drossos2020clotho} for audio$\to$text generation;
and (5)~\textbf{VideoHallucer}~\cite{wang2024videohallucer} for video$\to$text hallucination evaluation.
We report the results as mean$\pm$std over three random seeds. Additional dataset details are provided in Appendix~\ref{app:exp:datasets}.

\noindent\textbf{Baselines.}
We compare \method{} with Frozen and three groups of baselines using the same frozen backbone whenever applicable. Frozen uses direct decoding from the frozen backbone without post-processing. The resampling baselines include BoN+CLIP, BoN+VisualPRM~\cite{visualprm2025}, and BoN+CycleReward~\cite{cyclereward2025}, which select one output from multiple candidates. The modality-specific decoding baselines include VCD~\cite{leng2024mitigating} for image inputs, AAD~\cite{hsu2025reducing} for audio inputs, and TCD~\cite{zhang2024eventhallusion} for video inputs. The iterative refinement baselines include Woodpecker~\cite{yin2024woodpecker}, DeGF~\cite{degf2025}, and Volcano~\cite{lee2024volcano}. The iterative refinement methods follow the joint-conditioning feedback template in Eq.~\eqref{eq:naive-feedback}, while the resampling and decoding methods do not produce fact-level feedback. Prompt templates and implementation details are provided in Appendices~\ref{app:prompts:baselines} and~\ref{app:exp:models}.
\noindent\textbf{Metrics.}
To avoid self-referential evaluation, every headline metric is computed independently of \method{}'s evidence graph. Table~\ref{tab:metrics} lists the external metrics used for each evaluation path.

\begin{table}[t]
\centering
\small
\caption{External metrics for each evaluation path.}
\label{tab:metrics}
\begin{tabular}{@{}ll@{}}
\toprule
Path & Metrics \\
\midrule
COCO          & CHAIR$_s$$\downarrow$, CHAIR$_i$$\downarrow$, BERTScore$\uparrow$ \\
AMBER         & Disc.\ Acc.$\uparrow$, CHAIR$_g$$\downarrow$ \\
MMHal-Bench   & BERTScore$\uparrow$ \\
Clotho        & RougeL$\uparrow$, BLEU$\uparrow$, CLAP$\uparrow$, AEHR$\downarrow$ \\
VideoHallucer & HallucRate$\downarrow$, Acc$_\text{b}$$\uparrow$, Acc$_\text{h}$$\uparrow$, Paired$\uparrow$ \\
\bottomrule
\end{tabular}
\end{table}

\noindent\textbf{Implementation details.}
The primary backbone $\Phi$ is Qwen2.5-Omni-7B~\cite{qwen25omni2025}. The frozen model serves as both the generator of $\mathbf{Y}_t$ and the extractor that produces $G_{\mathbf{X}}$ and $G_{\mathbf{Y}_t}$ through Eq.~\eqref{eq:extract}. The extraction prompt $\mathcal{P}_{\text{ext}}$ asks $\Phi$ to enumerate facts as $(e_1, \text{rel}, e_2)$ triples. It is applied once to $\mathbf{X}$ before the repair loop and then to each $\mathbf{Y}_t$ during repair. Per-dataset hyperparameter settings, including repair rounds $T$, batch budget $\alpha$, and conflict weight $\lambda$, are listed in Appendix~\ref{app:exp:protocol}. We report results with LLaVA-1.5-7B~\cite{liu2024improved} to evaluate cross-backbone generalization. Additional results with GPT-5.5, Gemini 3.5 Flash, and Claude Haiku~4.5 are reported in Appendix~\ref{app:api_backbones}. Appendix~\ref{app:extractor_variants} compares \method{} against deterministic modality-specific extractors.

\subsection{Main Results (RQ1)}
\label{subsec:main}

To answer RQ1, we evaluate whether \method{} reduces hallucination across image$\to$text, image+text$\to$text, audio$\to$text, and video$\to$text generation paths. Table~\ref{tab:main_image} reports results on COCO, AMBER, and MMHal-Bench with Qwen2.5-Omni-7B and LLaVA-1.5-7B. Table~\ref{tab:main_audio} reports audio$\to$text results on Clotho, and Table~\ref{tab:main_video} reports video$\to$text results on VideoHallucer.

\begin{figure}[t]
\centering
\includegraphics[width=0.9\columnwidth]{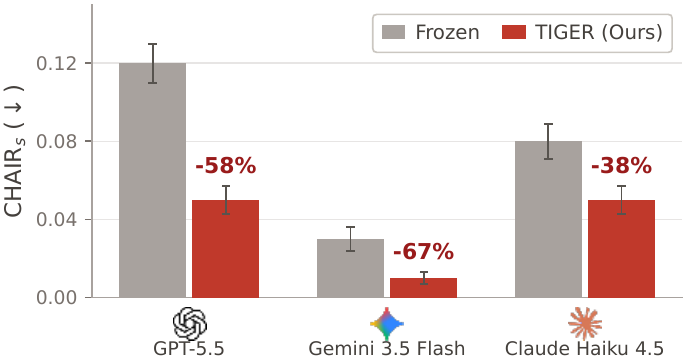}
\caption{\method{} on three proprietary API backbones (COCO
image$\to$text). \method{} reduces CHAIR$_s$ by 38--67\% relative to
Frozen. Full
results are in Appendix~\ref{app:api_backbones}.}
\label{fig:api_backbones}
\end{figure}

\begin{table*}[t]
\centering
\scriptsize
\setlength{\tabcolsep}{0.9pt}
\renewcommand{\arraystretch}{1.1}
\caption{Main results across two paths and two backbones. The image$\to$text path uses COCO and AMBER ; the image+text$\to$text path uses MMHal-Bench. \textbf{Bold}: best per column within each backbone block. \colorbox{tigerblue}{Blue}: \method{}.}
\label{tab:main_image}
\begin{tabular}{@{}l|ccc|cc|c|ccc|cc|c@{}}
\toprule
 & \multicolumn{6}{c|}{\textbf{Qwen2.5-Omni-7B}} & \multicolumn{6}{c}{\textbf{LLaVA-1.5-7B}} \\
\cmidrule(lr){2-7} \cmidrule(lr){8-13}
 & \multicolumn{3}{c|}{COCO} & \multicolumn{2}{c|}{AMBER} & MMHal & \multicolumn{3}{c|}{COCO} & \multicolumn{2}{c|}{AMBER} & MMHal \\
\cmidrule(lr){2-4}\cmidrule(lr){5-6}\cmidrule(lr){7-7}\cmidrule(lr){8-10}\cmidrule(lr){11-12}\cmidrule(lr){13-13}
\textbf{Method} & CHAIR$_s$$\downarrow$ & CHAIR$_i$$\downarrow$ & BERT$\uparrow$ & Disc.\ Acc$\uparrow$ & CHAIR$_g$$\downarrow$ & BERT$\uparrow$ & CHAIR$_s$$\downarrow$ & CHAIR$_i$$\downarrow$ & BERT$\uparrow$ & Disc.\ Acc$\uparrow$ & CHAIR$_g$$\downarrow$ & BERT$\uparrow$ \\
\midrule
Frozen      & .070$_{\pm.008}$ & .069$_{\pm.006}$ & .588$_{\pm.003}$ & 77.0$_{\pm1.3}$ & 5.0$_{\pm.3}$ & .703$_{\pm.009}$ & .060$_{\pm.006}$ & .028$_{\pm.003}$ & .740$_{\pm.003}$ & 70.3$_{\pm1.3}$ & 8.6$_{\pm.4}$ & .757$_{\pm.017}$ \\
BoN+CLIP    & .080$_{\pm.005}$ & .047$_{\pm.004}$ & .613$_{\pm.001}$ & 78.0$_{\pm1.2}$ & 7.4$_{\pm.4}$ & .717$_{\pm.007}$ & .060$_{\pm.006}$ & .025$_{\pm.003}$ & .746$_{\pm.003}$ & 73.0$_{\pm1.4}$ & 8.3$_{\pm.3}$ & .779$_{\pm.017}$ \\
BoN+VisPRM  & .060$_{\pm.008}$ & .031$_{\pm.004}$ & .616$_{\pm.002}$ & 79.7$_{\pm1.2}$ & 5.5$_{\pm.3}$ & .714$_{\pm.021}$ & .050$_{\pm.006}$ & .021$_{\pm.003}$ & .746$_{\pm.003}$ & 74.3$_{\pm1.4}$ & 8.7$_{\pm.4}$ & .781$_{\pm.017}$ \\
BoN+CycRew  & .130$_{\pm.011}$ & .082$_{\pm.007}$ & .598$_{\pm.003}$ & 78.0$_{\pm1.2}$ & 5.3$_{\pm.3}$ & .719$_{\pm.011}$ & .040$_{\pm.003}$ & .020$_{\pm.003}$ & .745$_{\pm.003}$ & 72.3$_{\pm1.4}$ & 12.2$_{\pm.5}$ & .777$_{\pm.016}$ \\
Woodpecker  & .100$_{\pm.009}$ & .067$_{\pm.014}$ & .616$_{\pm.003}$ & 79.0$_{\pm.8}$ & 5.1$_{\pm.3}$ & .706$_{\pm.017}$ & .050$_{\pm.007}$ & .025$_{\pm.003}$ & .740$_{\pm.004}$ & 76.0$_{\pm1.3}$ & 7.9$_{\pm.3}$ & .672$_{\pm.025}$ \\
DeGF        & .060$_{\pm.008}$ & .033$_{\pm.003}$ & .594$_{\pm.003}$ & 68.3$_{\pm.8}$ & 5.0$_{\pm.3}$ & .706$_{\pm.069}$ & .040$_{\pm.006}$ & .018$_{\pm.003}$ & .771$_{\pm.003}$ & 65.7$_{\pm1.5}$ & 8.4$_{\pm.4}$ & .780$_{\pm.018}$ \\
VCD         & .060$_{\pm.007}$ & .045$_{\pm.004}$ & .610$_{\pm.003}$ & 78.5$_{\pm1.2}$ & 4.8$_{\pm.3}$ & .720$_{\pm.010}$ & .045$_{\pm.006}$ & .020$_{\pm.003}$ & .755$_{\pm.003}$ & 72.0$_{\pm1.4}$ & 8.0$_{\pm.4}$ & .770$_{\pm.018}$ \\
Volcano     & .070$_{\pm.008}$ & .042$_{\pm.004}$ & .637$_{\pm.004}$ & 78.0$_{\pm.8}$ & 5.2$_{\pm.3}$ & .742$_{\pm.008}$ & .040$_{\pm.006}$ & .023$_{\pm.004}$ & .753$_{\pm.004}$ & 69.3$_{\pm1.5}$ & 8.2$_{\pm.3}$ & .767$_{\pm.019}$ \\
\rowcolor{tigerblue}
\textbf{\method{}} & $\mathbf{.050}_{\pm.007}$ & $\mathbf{.035}_{\pm.005}$ & $\mathbf{.643}_{\pm.003}$ & $\mathbf{82.0}_{\pm.8}$ & $\mathbf{4.3}_{\pm.2}$ & $\mathbf{.756}_{\pm.020}$ & $\mathbf{.030}_{\pm.006}$ & $\mathbf{.013}_{\pm.003}$ & $\mathbf{.772}_{\pm.003}$ & $\mathbf{79.7}_{\pm1.5}$ & $\mathbf{7.4}_{\pm.4}$ & $\mathbf{.790}_{\pm.018}$ \\
\bottomrule
\end{tabular}%
\end{table*}

As shown in Table~\ref{tab:main_image}, \method{} reduces
hallucination on image-based tasks while preserving semantic quality.
On Qwen2.5-Omni-7B, it reduces COCO CHAIR$_s$ from $0.070$ to $0.050$
and CHAIR$_i$ from $0.069$ to $0.035$, while improving BERTScore from
$0.588$ to $0.643$. On LLaVA-1.5-7B, it lowers COCO CHAIR$_s$ from
$0.060$ to $0.030$ and achieves the highest BERTScore. The same trend
appears on AMBER and MMHal-Bench, where \method{} improves
faithfulness without degrading text quality. On MMHal-Bench, a
category-level diagnostic further shows that the gains concentrate on
relation, comparison, and environment questions
(Appendix~\ref{app:mmhal-cat}). These results suggest that the repair is not tied to a single backbone or benchmark.

Tables~\ref{tab:main_audio} and~\ref{tab:main_video} show that the
gains extend beyond image inputs. On Clotho, \method{} improves
captioning quality and audio-text alignment, and it reduces AEHR from
$0.803$ to $0.757$, which indicates fewer unsupported audio event
mentions. On VideoHallucer, \method{} lowers HallucRate from $0.015$
to $0.010$ and improves Paired accuracy from $0.170$ to $0.198$.
Figure~\ref{fig:api_backbones} further extends the comparison to
three proprietary API backbones, where \method{} reduces CHAIR$_s$
by 38--67\% relative to Frozen and also improves BERTScore in every
case (Appendix~\ref{app:api_backbones}). Overall, \method{}
consistently reduces unsupported claims across multimodal inputs and across five open-source and proprietary
backbones, while maintaining task performance.

\begin{table}[t]
\centering
\scriptsize
\setlength{\tabcolsep}{2pt}
\renewcommand{\arraystretch}{1.0}
\caption{Audio$\to$text on Clotho with Qwen2.5-Omni-7B. CLAP is reference-free caption-audio similarity; AEHR is the audio analogue of CHAIR$_i$. \textbf{Bold}: best per column. \colorbox{tigerblue}{Blue}: \method{}.}
\label{tab:main_audio}
\resizebox{\columnwidth}{!}{%
\begin{tabular}{@{}l@{\hskip 4pt}cccc@{}}
\toprule
Method & RougeL$\uparrow$ & BLEU$\uparrow$ & CLAP$\uparrow$ & AEHR$\downarrow$ \\
\midrule
Frozen      & .142$_{\pm.002}$ & .013$_{\pm.001}$ & .332$_{\pm.004}$ & .803$_{\pm.009}$ \\
BoN+CLAP    & .144$_{\pm.001}$ & .016$_{\pm.001}$ & .329$_{\pm.003}$ & .797$_{\pm.007}$ \\
BoN+CycRew  & .139$_{\pm.002}$ & .017$_{\pm.000}$ & .345$_{\pm.003}$ & .795$_{\pm.006}$ \\
AAD         & .140$_{\pm.002}$ & .005$_{\pm.000}$ & .364$_{\pm.003}$ & .777$_{\pm.008}$ \\
Volcano     & .143$_{\pm.001}$ & .010$_{\pm.001}$ & .343$_{\pm.003}$ & .800$_{\pm.005}$ \\
\rowcolor{tigerblue}
\textbf{\method{}} & $\mathbf{.147}_{\pm.001}$ & $\mathbf{.019}_{\pm.001}$ & $\mathbf{.376}_{\pm.004}$ & $\mathbf{.757}_{\pm.008}$ \\
\bottomrule
\end{tabular}%
}
\end{table}

\begin{table}[t]
\centering
\scriptsize
\setlength{\tabcolsep}{2pt}
\renewcommand{\arraystretch}{1.0}
\caption{Video$\to$text on VideoHallucer with Qwen2.5-Omni-7B. HallucRate is the analogue of CHAIR$_g$; Paired requires both halves of a pair correct. \textbf{Bold}: best per column. \colorbox{tigerblue}{Blue}: \method{}.}
\label{tab:main_video}
\resizebox{\columnwidth}{!}{%
\begin{tabular}{@{}l@{\hskip 4pt}cccc@{}}
\toprule
Method & HallucRate$\downarrow$ & Acc$_{\text{b}}\uparrow$ & Acc$_{\text{h}}\uparrow$ & Paired$\uparrow$ \\
\midrule
Frozen          & .015$_{\pm.003}$ & .210$_{\pm.014}$ & .495$_{\pm.016}$ & .170$_{\pm.012}$ \\
BoN+VideoCLIP   & .015$_{\pm.004}$ & .219$_{\pm.013}$ & .501$_{\pm.015}$ & .175$_{\pm.011}$ \\
BoN+CycRew      & .017$_{\pm.004}$ & .227$_{\pm.012}$ & .508$_{\pm.017}$ & .189$_{\pm.013}$ \\
TCD             & .016$_{\pm.004}$ & .220$_{\pm.013}$ & .505$_{\pm.016}$ & .180$_{\pm.012}$ \\
Volcano         & .015$_{\pm.005}$ & .250$_{\pm.015}$ & .490$_{\pm.017}$ & .175$_{\pm.010}$ \\
\rowcolor{tigerblue}
\textbf{\method{}} & $\mathbf{.010}_{\pm.003}$ & $\mathbf{.260}_{\pm.013}$ & $\mathbf{.526}_{\pm.015}$ & $\mathbf{.198}_{\pm.011}$ \\
\bottomrule
\end{tabular}%
}
\end{table}

\subsection{Ablation: Are All Three Components Necessary? (RQ2)}
\label{subsec:ablation}

To answer RQ2, we conduct a component ablation that tests iterative repair, atomic projection, and deterministic risk ranking under the same refinement step $\mathbf{Y}_{t+1}=\Phi(\mathcal{P}_{\text{refine}},\mathbf{X},\mathbf{Y}_t,\mathcal{F}_t)$. We define four levels that differ only in how the feedback signal $\mathcal{F}_t$ is produced. \textbf{L0 (Frozen)} uses direct decoding from the backbone without repair, where $\mathbf{Y}=\Phi(\mathcal{P}_{\text{gen}},\mathbf{X})$. \textbf{L1 (Naive feedback)} adds iterative repair with joint-conditioning feedback $\mathcal{F}_t=\Phi(\mathcal{P}_{\text{fb}},\mathbf{X},\mathbf{Y}_t)$ as in Eq.~\eqref{eq:naive-feedback}. \textbf{L2 (Text feedback)} adds atomic projection by extracting $G_{\mathbf{X}}$ and $G_{\mathbf{Y}_t}$ independently through Eq.~\eqref{eq:extract}, but it still uses text feedback without deterministic risk ranking. \textbf{L3 (\method{})} adds deterministic risk ranking and sets $\mathcal{F}_t=\Psi_\alpha(G_{\mathbf{X}},G_{\mathbf{Y}_t})$ as in Eq.~\eqref{eq:tiger-feedback}. Thus, L0$\to$L1 tests iterative repair alone, L1$\to$L2 isolates atomic projection, and L2$\to$L3 isolates deterministic risk ranking.

Figure~\ref{fig:ablation} reports the ablation results on COCO. Iterative repair alone does not improve the output: L1 increases CHAIR$_s$ from $0.070$ to $0.080$ and decreases BERTScore from $0.588$ to $0.528$. This result supports our analysis that a repair loop can amplify hallucinations when its feedback is produced by joint conditioning on the input and output. Adding atomic projection in L2 reduces this bias and restores performance, raising BERTScore to $0.603$ while bringing CHAIR$_s$ and CHAIR$_i$ back to the Frozen level. Adding deterministic risk ranking in L3 gives the best result on all three metrics, with CHAIR$_s=0.050$, CHAIR$_i=0.035$, and BERTScore $=0.643$. Results show that the three components play different roles: iterative repair provides the correction mechanism, atomic projection improves the reliability of feedback, and deterministic risk ranking directs the repair budget to the facts most likely to be unsupported.

\begin{figure}[t]
\centering
\includegraphics[width=\columnwidth]{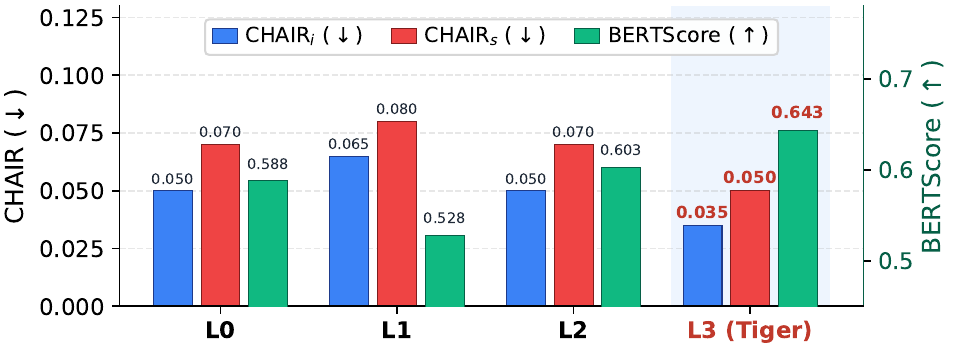}
\caption{Component ablation on COCO. }
\label{fig:ablation}
\end{figure}

\subsection{Hyperparameter Sensitivity (RQ3)}
\label{subsec:sensitivity}

To answer RQ3, we conduct a one-at-a-time sensitivity analysis on COCO for the three main hyperparameters of \method{}: the number of repair rounds $T$, the repair budget $\alpha$, and the conflict weight $\lambda$. For each sweep, we vary one hyperparameter and keep the other two fixed at the default setting. Figure~\ref{fig:sensitivity} reports CHAIR$_s$, which measures object hallucination at the sentence level and is independent of the internal risk score used by \method{}.

The results show that \method{} is stable near the default setting. Increasing $T$ reduces CHAIR$_s$ substantially at first, but the curve flattens after a small number of repair rounds. This suggests that most correctable high-risk facts are handled early in the repair process. The repair budget $\alpha$ shows a trade-off between coverage and stability. A small budget may leave some risky facts unrepaired, while a large budget can revise too many facts in one round. The conflict weight $\lambda$ also requires balance. When $\lambda=0$, the selector ignores direct contradictions; when $\lambda$ is too large, conflict signals can dominate the ranking. Overall, the default setting lies in a low-CHAIR$_s$ region across all three sweeps, which indicates that the method is not sensitive to small changes in its main hyperparameters. Appendix~\ref{app:unified} further shows that a single unified
configuration transfers across all five benchmarks with results
close to the per-dataset operating points, and that the COCO
sensitivity trend also holds on Clotho.

\begin{figure}[t]
\centering
\includegraphics[width=\linewidth]{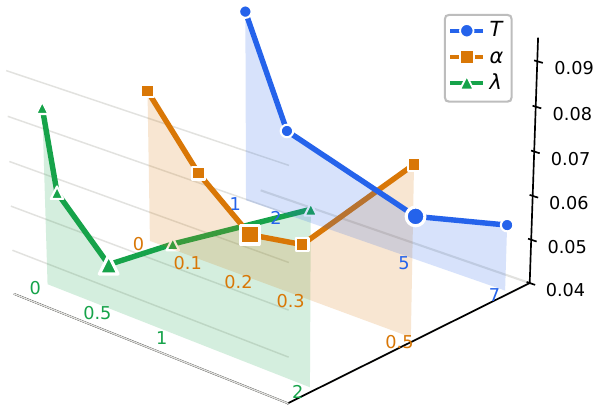}
\caption{Hyperparameter sensitivity of \method{} on COCO. Each curve varies one hyperparameter while keeping the other two fixed. The vertical axis reports CHAIR$_s$; lower is better.}
\label{fig:sensitivity}
\end{figure}

\subsection{Mechanism: Projection and Refinement (RQ4)}
\label{subsec:mechanism}

To answer RQ4, we test two mechanisms that distinguish \method{} from joint-feedback baselines: whether atomic projection reduces spurious correlation in feedback, and whether the refinement step can recover correct facts that are missing from the extracted input graph $G_{\mathbf{X}}$. These probes focus on the feedback stage because this is where unsupported claims can either be removed or reinforced.

\begin{figure*}[t]
\centering
\begin{subfigure}{0.32\linewidth}
  \centering
  \includegraphics[width=\linewidth]{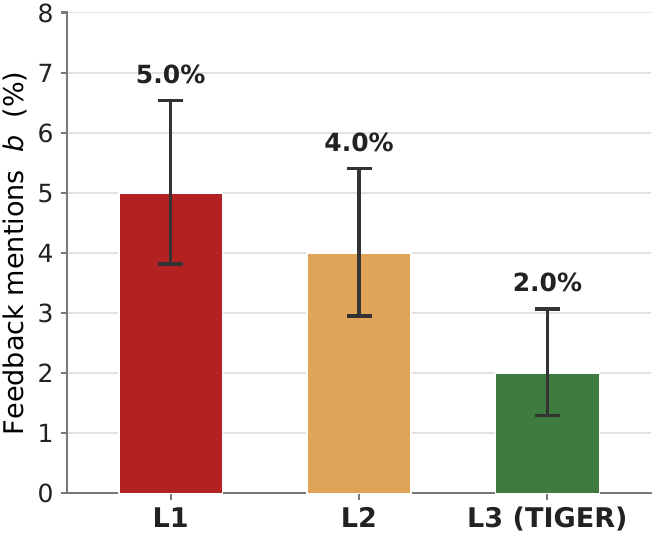}
  \caption{Feedback mention rate.}
  \label{fig:rq4_fer}
\end{subfigure}\hfill
\begin{subfigure}{0.32\linewidth}
  \centering
  \includegraphics[width=\linewidth]{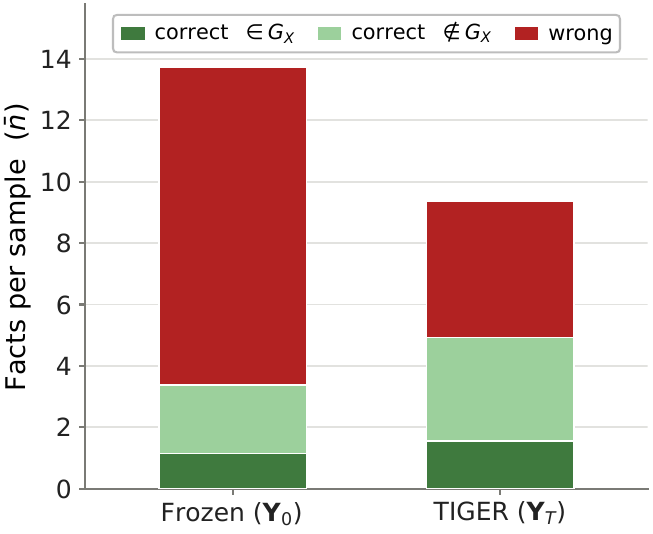}
  \caption{Fact composition of $G_{\mathbf{Y}}$.}
  \label{fig:rq4_composition}
\end{subfigure}\hfill
\begin{subfigure}{0.32\linewidth}
  \centering
  \includegraphics[width=\linewidth]{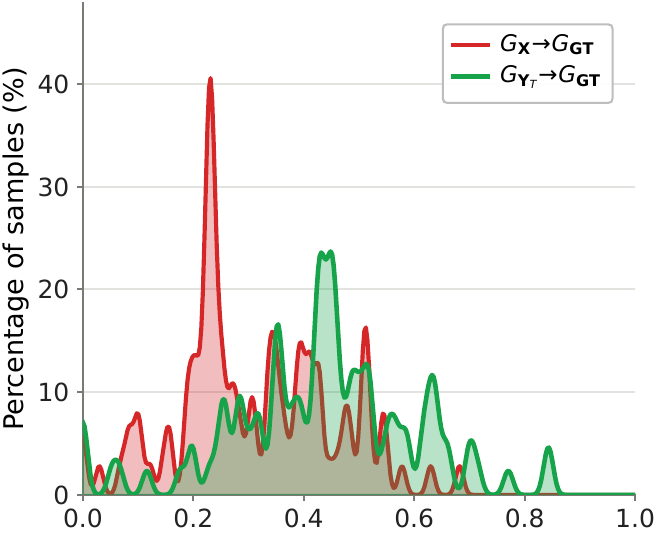}
  \caption{Sample-level similarity to $G_{\mathbf{GT}}$.}
  \label{fig:rq4_density}
\end{subfigure}
\caption{Mechanism analysis on COCO val2014. (a) Joint feedback channels mention the absent object $b$ more often than the atomic channel. (b) Compared with Frozen, \method{} preserves correct facts, recovers additional correct facts beyond $G_{\mathbf{X}}$, and removes wrong facts. (c) The repaired output shifts toward higher similarity to $G_{\mathbf{GT}}$, which shows that refinement recovers content that the extractor misses.}
\label{fig:rq4}
\end{figure*}

We first test whether atomic projection reduces the spurious correlation that joint feedback can inherit from the backbone. We select $1000$ images from COCO val2014 where each image contains a scene cue $a$ (e.g., beach or kitchen) and does not contain an object $b$ that often co-occurs with $a$ in COCO captions. We compare three feedback channels under the same $T\!=\!5$ repair loop. L1 uses naive joint feedback, $\mathcal{F}_t=\Phi(\mathcal{P}_{\text{fb}},\mathbf{X},\mathbf{Y}_t)$, as in Eq.~\eqref{eq:naive-feedback}. L2 uses independently extracted graphs as text in the feedback prompt, but it does not perform deterministic risk ranking. L3 uses the atomic feedback of \method{}, $\Psi_\alpha(G_{\mathbf{X}},G_{\mathbf{Y}_t})$. We report the feedback mention rate, defined as the fraction of samples whose feedback mentions the absent object $b$. Figure~\ref{fig:rq4}(a) shows that L1 mentions $b$ in $5.0\%$ of samples and L2 in $4.0\%$, while L3 mentions $b$ in only $2.0\%$. This result indicates that joint feedback can reintroduce co-occurrence priors, whereas atomic feedback reduces this channel.

We next test whether refinement can recover correct content that the extractor misses. We label each fact in the repaired output as correct if it is supported by a reference graph $G_{\mathbf{GT}}$ extracted from the five captions of the image, and as in-extraction if it is supported by $G_{\mathbf{X}}$. Figure~\ref{fig:rq4}(b) reports the resulting fact composition. Compared with Frozen, \method{} keeps more correct facts, recovers additional correct facts beyond $G_{\mathbf{X}}$, and removes a large portion of wrong facts. This shows that refinement does not simply copy the extracted graph. Instead, it can still use the raw input $\mathbf{X}$ to recover facts that the extractor did not capture. Figure~\ref{fig:rq4}(c) gives a sample-level view of the same effect. The similarity from $G_{\mathbf{X}}$ to $G_{\mathbf{GT}}$ is concentrated around lower values, while the similarity from $G_{\mathbf{Y}_T}$ to $G_{\mathbf{GT}}$ shifts toward higher values. This shift shows that the repaired output covers more reference-supported content than the extracted input graph alone. Overall, these probes explain why \method{} improves over the L1 and L2 ablations: atomic projection reduces biased feedback, and input-grounded refinement preserves the ability to recover missing correct facts. 

This recovery indicates that the repair loop tolerates gaps in
$G_{\mathbf{X}}$. Two additional analyses examine the extent of this
tolerance. When we annotate sampled outputs by whether each claim is graph-representable, coverage ranges from 74\% to 87\% across COCO, MMHal-Bench, and Clotho, and \method{} improves over Frozen in every coverage bucket, including samples with coverage below 80\% (Appendix~\ref{app:coverage}). When we instead corrupt $G_{\mathbf{X}}$ directly, deleting or injecting 30\% of its facts, performance degrades gradually and remains better than Frozen (Appendix~\ref{app:perturb}).



\begin{table}[t]
\centering
\scriptsize
\setlength{\tabcolsep}{0pt}
\renewcommand{\arraystretch}{1.15}
\caption{CrisisFACTS case study on five disaster event-days. P/R/F1 are gold-match precision, recall, and F1. \textbf{Bold}: better score. \method{} columns are shaded \colorbox{tigerblue}{blue}.}
\label{tab:case_study}
\begin{tabular*}{\columnwidth}{@{\extracolsep{\fill}}l c >{\columncolor{tigerblue}}c c >{\columncolor{tigerblue}}c c >{\columncolor{tigerblue}}c@{}}
\toprule
& \multicolumn{2}{c}{P$\uparrow$} 
& \multicolumn{2}{c}{R$\uparrow$} 
& \multicolumn{2}{c}{F1$\uparrow$} \\
\cmidrule(lr){2-3}\cmidrule(lr){4-5}\cmidrule(lr){6-7}
Disaster & Frozen & \method{} & Frozen & \method{} & Frozen & \method{} \\
\midrule
Hurricane & .679 & $\mathbf{1.00}$ & .631 & $\mathbf{.679}$ & .654 & $\mathbf{.809}$ \\
Wildfire  & .714 & $\mathbf{.962}$ & .583 & $\mathbf{.652}$ & .642 & $\mathbf{.777}$ \\
Flood     & .623 & $\mathbf{.943}$ & .661 & $\mathbf{.707}$ & .641 & $\mathbf{.808}$ \\
Explosion & .583 & $\mathbf{.909}$ & .552 & $\mathbf{.594}$ & .567 & $\mathbf{.718}$ \\
Tornado   & .654 & $\mathbf{.978}$ & .587 & $\mathbf{.618}$ & .619 & $\mathbf{.757}$ \\
\bottomrule
\end{tabular*}
\end{table}

\subsection{Case Study: Multi-Source Crisis Reporting (RQ5)}
\label{subsec:case_study}

To answer RQ5, we test whether \method{} transfers from clean benchmark inputs to a noisy real-world setting. We use \emph{CrisisFACTS}~\cite{crisisfacts2022}, a NIST benchmark that contains Twitter, Reddit, Facebook, and online-news streams with assessor-annotated gold facts. We select one peak event-day from each of five disaster types: Hurricane, Wildfire, Flood, Explosion, and Tornado. For each event, the backbone receives text streams and six high-priority images, and it generates a situation report. Figure~\ref{fig:casestudy_pipeline} shows the Hurricane example, where the multi-source input is consolidated into an evidence graph $G_{\mathbf{X}}$ with $387$ facts and $339$ edges. This setting is substantially noisier and larger than the benchmark inputs, so it tests whether fact-level repair remains useful beyond controlled datasets.
\begin{figure}[t]
    \centering
    \includegraphics[width=\columnwidth]{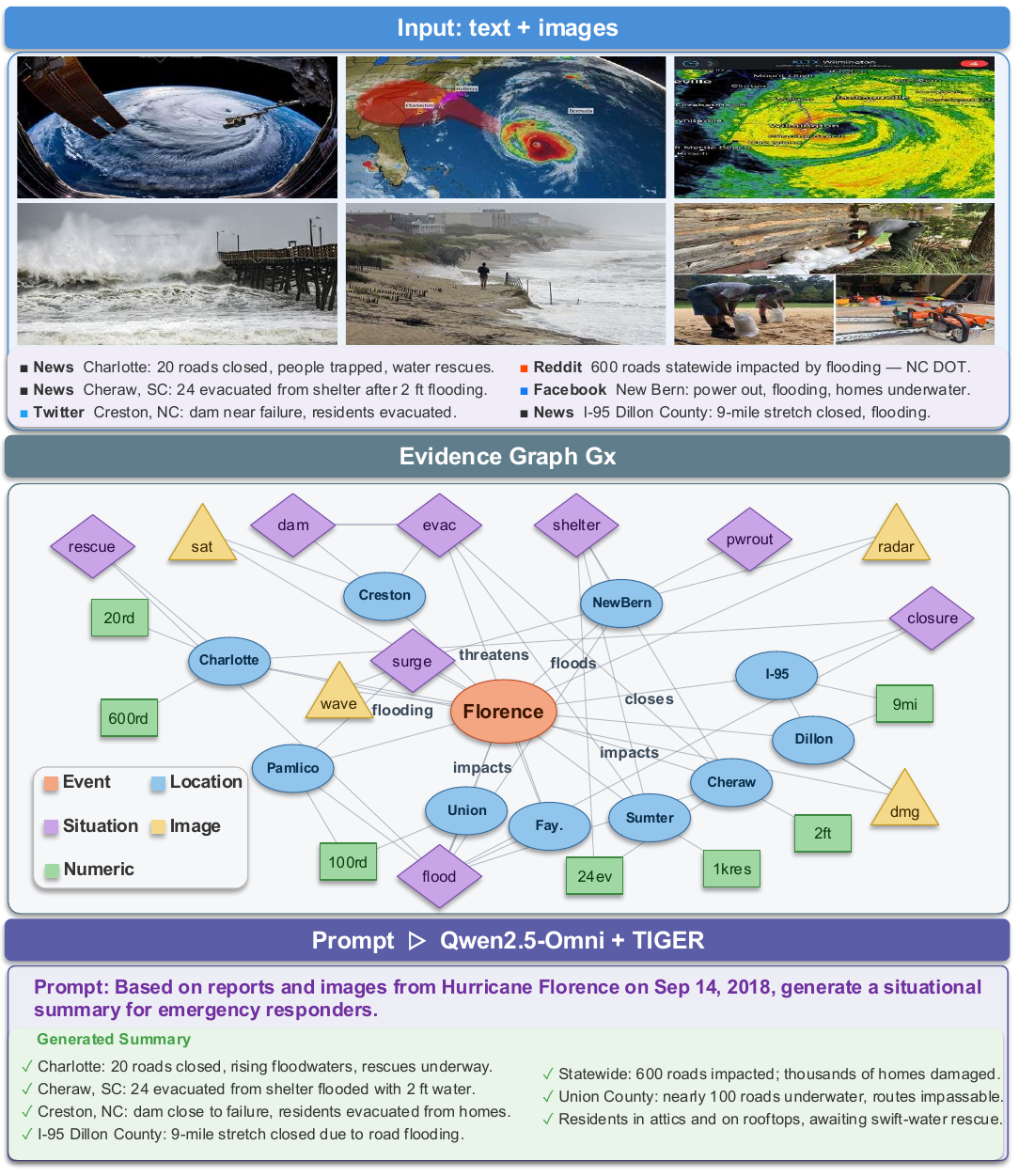}
\caption{Case-study pipeline on the Hurricane event. Multi-source inputs are consolidated into $G_{\mathbf{X}}$.}
    \label{fig:casestudy_pipeline}
    \vspace{-15pt}
\end{figure}

Table~\ref{tab:case_study} reports the Frozen and \method{} results on Qwen2.5-Omni-7B. This case study tests whether the repair mechanism can improve grounding in a realistic multi-source application. Across the five disasters, \method{} improves average Precision from $0.65$ to $0.96$, Recall from $0.60$ to $0.65$, and F1 from $0.62$ to $0.77$. The precision gain is especially important in this setting because situation reports should avoid claims that are not supported by the evidence. At the same time, the recall and F1 improvements show that \method{} does not simply remove uncertain content; it also preserves or recovers facts that are supported by the evidence graph. Qualitatively, the repaired reports replace vague statements with grounded entities and statistics, such as changing ``in some areas'' to ``Creston, North Carolina'' and ``numerous road closures'' to ``a 9-mile stretch of Interstate 95 in Dillon County.'' These results show that \method{} can support grounded summarization when the input contains noisy evidence from multiple sources.

\subsection{Computational Cost (RQ6)}
\label{subsec:cost}

To answer RQ6, we measure the inference cost of \method{} and test
whether lower-cost operating points retain its gains.
Figure~\ref{fig:compute_cost} reports wall-clock time per sample on
COCO under the same Qwen2.5-Omni-7B backbone on a single GPU.
\method{} at $T\!=\!5$ takes $199.3$\,s per sample, about
$5\!\times$ Frozen, but is faster than the other iterative methods,
Volcano ($230.0$\,s) and Woodpecker ($245.9$\,s). Since the
sensitivity curve in Figure~\ref{fig:sensitivity} flattens for
$T\!\geq\!3$, we evaluate two lower-cost operating points:
$T\!=\!3$, and an early-stopping variant that terminates when the
mean per-fact risk stops decreasing (Appendix~\ref{app:earlystop}).
As shown in Table~\ref{tab:cost}, $T\!=\!3$ retains most of the
reduction in hallucination while cutting backbone calls from 12 to
8, and early stopping reduces the average number of rounds to 2.71
with limited loss. These operating points make the cost--quality
trade-off explicit; further details are given in Appendix~\ref{app:earlystop}.

\begin{table}[t]
\centering
\caption{Cost--quality trade-off on COCO with Qwen2.5-Omni-7B.
Wall-clock is per sample on a single GPU. \textbf{Bold}: best per
column. \colorbox{tigerblue}{Blue}: early-stopping.}
\label{tab:cost}
\footnotesize
\setlength{\tabcolsep}{3pt}
\renewcommand{\arraystretch}{1.15}
\resizebox{\columnwidth}{!}{%
\begin{tabular}{@{}lcccccc@{}}
\toprule
Config & CHAIR$_s\downarrow$ & CHAIR$_i\downarrow$ & BERT$\uparrow$ & Calls & Wall-clock & Avg.\ rounds \\
\midrule
Frozen                 & .070 & .069 & .588 & 1   & 38.5\,s  & --   \\
\method{} $T\!=\!3$    & .055 & .040 & \textbf{.651} & 8 & 133\,s & 3 \\
\method{} $T\!=\!5$    & \textbf{.050} & \textbf{.035} & .643 & 12 & 199.3\,s & 5 \\
\rowcolor{tigerblue}
\method{} + early stop & .060 & .040 & .637          & 7.4 & 123\,s   & 2.71 \\
\bottomrule
\end{tabular}}
\end{table}

\begin{figure}[t]
\centering
\includegraphics[width=\columnwidth]{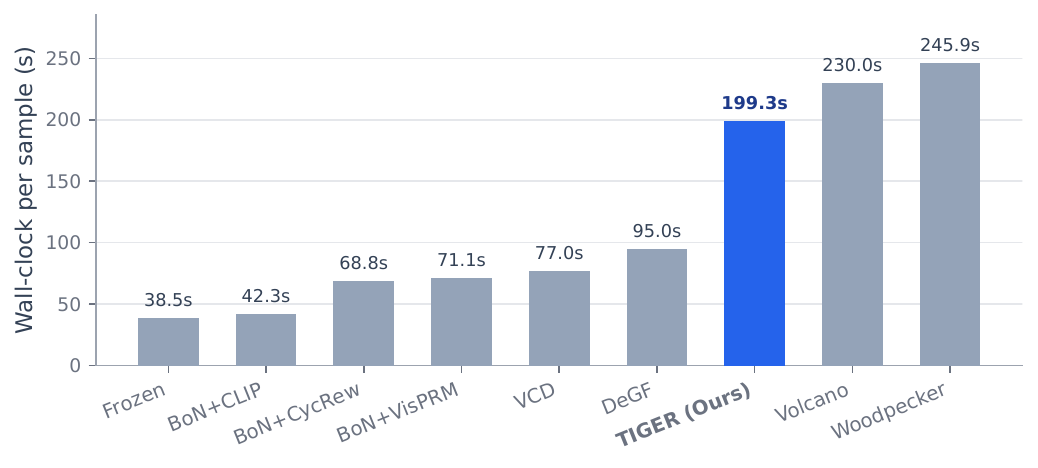}
\caption{Wall-clock seconds per COCO sample on a single GPU
(Qwen2.5-Omni-7B backbone). Bars ordered by ascending cost.}
\label{fig:compute_cost}
\end{figure}


\section{Conclusion}
\label{sec:conclusion}

We presented \method{}, an inference-time framework for fact-level
hallucination repair in multimodal generation. \method{} separates
input observation and output claim extraction, ranks claims by
deterministic support/conflict risk, and locally repairs high-risk
claims with a frozen backbone. Experiments across four cross-modal
paths and five open-source and proprietary backbones show that
\method{} reduces unsupported content while preserving task quality.
Ablations confirm that naive self-correction can amplify
hallucination and that iterative repair, atomic projection, and
risk-based ranking are jointly necessary.

\section*{Limitations}

This work focuses on four cross-modal generation paths and the
benchmarks considered in the paper; broader settings such as 3D
scenes, streaming multimodal inputs, and highly specialized domains
may require additional evaluation. \method{} represents evidence as
localized factual triples, which is less direct for abstract,
causal, or subjective content, although our annotation in
Appendix~\ref{app:coverage} shows that \method{} still improves over
direct decoding even when coverage is low. The repair loop also
depends on the quality of automatic fact extraction, but the
perturbation study in Appendix~\ref{app:perturb} shows graceful
degradation under moderate noise. Finally, iterative repair adds
inference-time computation; the lower-cost operating points in
Section~\ref{subsec:cost} reduce this overhead, and distilling
\method{}-repaired outputs into a lightweight model, for example
with learnability-grounded trajectory
selection~\cite{yu2026lark}, is a promising direction for future
work.



\bibliography{custom}

\appendix
\newpage
\section*{Appendix}
\addcontentsline{toc}{section}{Appendix}

\startcontents[sections]
\printcontents[sections]{l}{1}{\setcounter{tocdepth}{2}}

\section{Related Work}
\label{sec:related}

\noindent\textbf{Hallucination and verification.}
Unified multimodal models can process text, images, audio, and video,
but their textual outputs may contain facts that are not supported by
the input. This problem is closely related to cross-modal
inconsistency, where models produce conflicting claims for the same
content across
modalities~\cite{crossmodalconsistency2024,samecontent2025,sun2026learningtrustselectivecontext,dong2026generated}.
Multimodal hallucination has been studied through surveys,
benchmarks, and decoding
methods~\cite{hallsurvey2025,mmhalbench2023,wang2023amber,degf2025,lan-etal-2025-attention,fan2026chainsseeanswersdont}.
These studies provide useful evaluation tools, but many methods still
score or revise the whole response, which makes it difficult to
identify the specific facts that require correction. Structured
verification addresses this limitation by representing objects,
attributes, and relations explicitly. Scene graph, knowledge graph, and other graph based methods have
been used for fine grained evaluation, controllable generation,
knowledge grounding, and multimodal
reasoning~\cite{scenebench2024,generateanyscene2024,hyperglm2025,mario2026,liu2024graph,cai2025role,liu2024knowledge,zheng2024kg}.
\method{} follows this direction, but uses structured representations
for iterative repair. It extracts an observation graph from the input
and a claim graph from the current output, then compares each claim
against the input through support and conflict scores.

\noindent\textbf{Repair at inference time.}
Several methods reduce hallucination or improve generation quality
without updating the backbone, through inference-time interventions
such as query refinement, decoding control, and efficiency-oriented
model adaptation~\cite{zhou2026one,zhou2026look,zhou2025dropping}.
Resampling methods select one output from multiple candidates using
external signals such as visual alignment, process reward models, or
cycle consistency~\cite{visualprm2025,cyclereward2025}. These methods
can improve output quality, but they do not identify which facts in
the response are unsupported. Feedback based repair is closer to our
setting. Woodpecker~\cite{yin2024woodpecker},
Volcano~\cite{lee2024volcano}, and DeGF~\cite{degf2025} revise an
initial output by conditioning on both the input and the current
response. This design is simple, but the current response can affect
how the model interprets the input, so hallucinated claims may be
reinforced rather than removed. The feedback is also written in
natural language, which makes it difficult to rank facts or enforce a
repair budget. Related iterative, counter-example driven correction
has also been explored beyond multimodal generation, in
formal-language learning and agent
control~\cite{chen2024llms,chen2026towards,chen2025cedar}.
\method{} redesigns this step by extracting the input and output
independently, assigning each claim a graph-conditioned risk score,
and repairing only the selected high risk facts. This design makes
feedback quantifiable and reduces the dependence of repair on joint
generation.

\section{Prompts}
\label{app:prompts}

This appendix lists the verbatim prompt templates used by \method{} and
by the three text-feedback baselines. All prompt templates are
applicable to every backbone model evaluated in this paper (both
open-source and proprietary APIs) and do not depend on any specific
model architecture.

Notation follows Section~\ref{sec:method}. All methods share the same
generation step under a unified iterative repair framework. The
generation step produces the initial output via
$\mathcal{P}_{\text{gen}}$:
\begin{align*}
\mathbf{Y}_0 = \Phi(\mathcal{P}_{\text{gen}},\, \mathbf{X}).
\end{align*}
The methods differ in how the feedback signal $\mathcal{F}_t$ is
produced and how the repair prompt
$\mathcal{P}_{\text{refine}}^{(\cdot)}$ is instantiated. We describe
each in turn.

\paragraph{\method{}.}
\method{} decomposes feedback generation into two stages: independent
extraction and graph-conditioned risk computation. First, the input and
the current output are each independently projected into a structured
fact graph:
$G_{\mathbf{X}} = \Phi(\mathcal{P}_{\text{ext}},\, \mathbf{X})$,
$G_{\mathbf{Y}_t} = \Phi(\mathcal{P}_{\text{ext}},\, \mathbf{Y}_t)$;
$G_{\mathbf{X}}$ is extracted once before the repair loop and remains
fixed. The deterministic operator $\Psi_\alpha$ then computes per-fact
risk over the two graphs and selects the high-risk set as the feedback
signal: $\mathcal{F}_t = \Psi_\alpha(G_{\mathbf{X}},\,
G_{\mathbf{Y}_t})$. Finally, the repair prompt
$\mathcal{P}_{\text{refine}}$ produces the revised output from the raw
input, the current output, and the high-risk fact set:
\begin{align*}
    \mathbf{Y}_{t+1} = \Phi(\mathcal{P}_{\text{refine}},\, \mathbf{X},\, \mathbf{Y}_t,\, \mathcal{F}_t).
\end{align*}

Per sample, \method{} calls the backbone $\Phi$ a total of $2 + 2T$
times: $\mathcal{P}_{\text{gen}}$ once,
$\mathcal{P}_{\text{ext}}$ $1{+}T$ times (once for $\mathbf{X}$ and
once per round for each $\mathbf{Y}_t$), and
$\mathcal{P}_{\text{refine}}$ $T$ times.

\paragraph{Volcano~\citep{lee2024volcano}.}
We adapt Volcano's critique-then-revise protocol to the shared
frozen backbone. We do not use a separate Volcano checkpoint, so
the comparison remains backbone-matched: each round first produces
a natural-language critique conditioned on the input and the current
response, then revises the response conditioned on its own critique.
Formally, in round $t$:
\begin{align*}
    \mathcal{F}_t       &= \Phi(\mathcal{P}_{\text{fb}}^{\text{V}},\, \mathbf{X},\, \mathbf{Y}_t), \\
    \mathbf{Y}_{t+1}    &= \Phi(\mathcal{P}_{\text{refine}}^{\text{V}},\, \mathbf{Y}_t,\, \mathcal{F}_t).
\end{align*}
We adopt the critique and revise prompt style released in the
official Volcano repository (reproduced in
Appendix~\ref{app:prompts:baselines}) and use the paper's default
$T=3$ critique-revise rounds. Per sample Volcano calls the shared
backbone $\Phi$ a total of $1+2T$ times: the initial generation
once, then $T$ critique passes and $T$ revise passes. This
adaptation applies uniformly across all modalities (image, audio,
video); the procedure and prompts are unchanged, only the backbone
is shared with the other methods.

\paragraph{Woodpecker~\citep{yin2024woodpecker}.}
We faithfully reimplement Woodpecker's five-stage hallucination
correction pipeline as described in the original paper. Each stage
uses a dedicated prompt template adapted from the official Woodpecker
codebase; the visual-validation stage uses
Grounding~DINO~\citep{liu2024grounding} (already loaded for our
deterministic-extractor comparison in
Appendix~\ref{app:extractor_variants}) as the open-vocabulary visual
detector. The pipeline runs once per sample ($T=1$); formally:
\begin{align*}
    \mathbf{K}   &= \Phi(\mathcal{P}^{\text{W}}_{\text{kce}},\, \mathbf{Y}_0), \\
    \mathbf{Q}   &= \Phi(\mathcal{P}^{\text{W}}_{\text{qf}},\, \mathbf{K}), \\
    \mathbf{V}   &= \mathrm{DINO}(\mathbf{Q},\, \mathbf{X}), \\
    \mathbf{C}   &= \Phi(\mathcal{P}^{\text{W}}_{\text{vcg}},\, \mathbf{V}), \\
    \mathbf{Y}_T &= \Phi(\mathcal{P}^{\text{W}}_{\text{refine}},\, \mathbf{X},\, \mathbf{Y}_0,\, \mathbf{C}),
\end{align*}
where $\mathbf{K}$ is the list of key concepts extracted from
$\mathbf{Y}_0$, $\mathbf{Q}$ is the corresponding set of verification
questions, $\mathbf{V}$ is the Grounding-DINO output (boxes and
labels), and $\mathbf{C}$ is the visual-claim list that serves as
the explicit feedback signal $\mathcal{F}$ for the refine step.
The five stages decompose as follows:

\begin{enumerate}
\item \textbf{Key concept extraction.} A prompt extracts the list of
object-level concepts mentioned in $\mathbf{Y}_0$ that need
verification. Output: a JSON list of (entity, attribute) pairs.

\item \textbf{Question formulation.} For each extracted concept, a
prompt formulates a yes/no verification question of the form
``Does the image contain a \{concept\}?'' or
``Is the \{entity\} \{attribute\}?''.

\item \textbf{Visual knowledge validation.} For each verification
question, Grounding~DINO is invoked with a text query corresponding
to the concept (confidence threshold $0.35$, top-$5$ detections per
query). The detected bounding boxes and labels constitute the
``visual evidence'' for that question.

\item \textbf{Visual claim generation.} The detected evidence is
formatted into a structured claim list, e.g.\ ``GroundingDINO finds
2 boxes labelled \{label\} with scores \{...\}''. This list becomes
the explicit feedback $\mathcal{F}$ for the correction stage.

\item \textbf{Hallucination correction.} The backbone $\Phi$ is
called once more with the image, the original response, and the
visual claim list to produce the corrected response $\mathbf{Y}_T$.
\end{enumerate}

\noindent Per sample Woodpecker calls $\Phi$ four times
($\mathcal{P}_{\text{gen}}$, key-concept extraction, question
formulation, hallucination correction) plus one Grounding~DINO call
per extracted concept. The five-stage pipeline runs once per sample
rather than iteratively, matching the original Woodpecker protocol.
For modalities outside Grounding~DINO's coverage (audio, video),
Woodpecker is not applicable and is therefore omitted from the
audio/video benchmarks (Tables~\ref{tab:main_audio},
\ref{tab:main_video}). Prompts for the five stages are reproduced
verbatim in Appendix~\ref{app:prompts:baselines}.

\paragraph{DeGF~\citep{degf2025}.}
DeGF (Self-Correcting Decoding with Generative Feedback) uses a
text-to-image diffusion model to generate an auxiliary visual
reference from the model's initial response, then performs
contrastive decoding between the original image and the generated
image. We follow the official protocol with two adjustments noted
below. Unlike Volcano and \method{}, DeGF runs a single repair pass
($T=1$) and consumes the feedback at the logit level rather than the
prompt level:
\begin{align*}
\mathbf{Y}_0           &= \Phi(\mathcal{P}_{\text{gen}},\, \mathbf{X}), \\
\mathbf{X}_{\text{aux}} &= \mathrm{SD}(\mathbf{Y}_0), \\
\mathbf{Y}_T           &= \Phi^{\text{CD}}\bigl(\mathcal{P}_{\text{gen}},\, \mathbf{X},\, \mathbf{X}_{\text{aux}};\, \alpha\bigr),
\end{align*}
where $\mathrm{SD}(\cdot)$ is a Stable-Diffusion text-to-image
generator and $\Phi^{\text{CD}}$ decodes greedily from contrastive
logits per DeGF:
\[
\tilde{s}^{(k)} = (1+\alpha)\,\Phi(\mathcal{P}_{\text{gen}}, \mathbf{X})^{(k)} - \alpha\,\Phi(\mathcal{P}_{\text{gen}}, \mathbf{X}_{\text{aux}})^{(k)},
\]
at every decoding step $k$. 

The pipeline for each sample is:

\begin{enumerate}
\item \textbf{Initial generation.} Produce
$\mathbf{Y}_0 = \Phi(\mathcal{P}_{\text{gen}}, \mathbf{X})$ as usual,
where $\mathbf{X}$ is the original image $I_{\text{orig}}$.

\item \textbf{Generative feedback via diffusion.} Take the
caption-like response $\mathbf{Y}_0$ as a prompt and synthesise an
auxiliary image $I_{\text{gen}} = \mathrm{SD}(\mathbf{Y}_0)$ using
Stable Diffusion. We use SD-Turbo
($\mathrm{stabilityai/sd\text{-}turbo}$) with $1$ denoising step
and the default scheduler, matching the original paper's
``efficient'' configuration.

\item \textbf{Contrastive decoding.} Run two forward passes of the
backbone with the same text prompt: one conditioned on
$I_{\text{orig}}$ producing logits
$s_{\text{orig}}^{(k)}$ at decoding step $k$, the other conditioned on
$I_{\text{gen}}$ producing $s_{\text{gen}}^{(k)}$. The corrected
output $\mathbf{Y}_T$ is decoded greedily from the contrastive
logits
\[
\tilde{s}^{(k)} = (1+\alpha)\, s_{\text{orig}}^{(k)} \;-\; \alpha\, s_{\text{gen}}^{(k)}, \qquad \alpha = 0.5,
\]
matching DeGF and the same $\alpha$ as the original paper.
\end{enumerate}

\noindent Per sample DeGF runs one diffusion call plus
$1+2L$ backbone forward passes (where $L$ is the number of decoded
tokens), so it is comparable in cost to VCD-style contrastive
decoding on the same backbone. The two protocol adjustments are:
(i) we use SD-Turbo (1-step) instead of the original SD~1.5 (50-step)
for tractability across $1{,}000$ images, and (ii) we use the same
frozen Qwen2.5-Omni-7B / LLaVA-1.5-7B backbones as every other
baseline rather than LLaVA-1.5-13B; both deviations are documented
in our code release. 

\paragraph{Key difference from \method{}.}
These baselines use response-conditioned correction signals, but
they do not independently extract an observation graph from the
input and a claim graph from the current output, nor do they
compute deterministic per-claim support/conflict risk for budgeted
repair. \method{} differs by making the feedback signal explicit,
fact-level, and rankable.

\subsection{\method{} Prompts}
\label{app:prompts:tiger}

\paragraph{$\mathcal{P}_{\text{gen}}$.}
\method{} does not rewrite the task prompt at the first round; the raw
dataset prompt is fed to $\Phi$ verbatim. The actual prompts used in
our experiments are listed below. For MMHal-Bench, Clotho, and VideoHallucer, the prompt varies per instance
according to the benchmark default; we show the template form.

\begin{prompt}[title={$\mathcal{P}_{\text{gen}}$ --- COCO captions}]
Please describe this image in detail.
\end{prompt}

\begin{prompt}[title={$\mathcal{P}_{\text{gen}}$ --- MMHal-Bench}]
\{per-instance question\}
\end{prompt}

\begin{prompt}[title={$\mathcal{P}_{\text{gen}}$ --- Clotho}]
\{per-instance audio question\}
\end{prompt}

\begin{prompt}[title={$\mathcal{P}_{\text{gen}}$ --- VideoHallucer}]
\{per-instance video question\}
\end{prompt}
\paragraph{$\mathcal{P}_{\text{ext}}$.}
A single extraction template instantiated under modality conditioning,
as described in Section~\ref{sec:redesign}. The multimodal form is
applied to $\mathbf{X}$ to produce $G_{\mathbf{X}}$; the text form is
applied to $\mathbf{Y}_t$ to produce $G_{\mathbf{Y}_t}$. Both forms
share an identical schema (field roles, standardized predicate
vocabulary, and triple output format) and differ only in their
few-shot exemplars and the source-domain phrasing.

\begin{prompt}[title={$\mathcal{P}_{\text{ext}}$ --- multimodal form,
applied to $\mathbf{X}$}]
You are a multimodal fact extraction system. Extract ALL observable facts from the provided input across all modalities (image, video, audio, text). Each fact must be a triple (subject, predicate, object).

STRICT OUTPUT FORMAT --- numbered list, one triple per line, parentheses required:

\quad 1. (subject, predicate, object)

\quad 2. (subject, predicate, object)

\quad \ldots

FIELD ROLES:

\quad subject: head entity, written as a bare noun phrase WITHOUT attributes. GOOD: (car, is, red). BAD: (red car, is, parked).

\quad predicate: the relation. Use the standardized forms below when applicable.

\quad object: tail entity, attribute value, or count.

STANDARDIZED PREDICATES:

\quad Attributes (color, size, material, shape): \texttt{is}. e.g., (car, is, red).

\quad Counts: \texttt{count}. e.g., (lights, count, 3).

\quad Spatial (directional; subject is the figure, object is the ground): \texttt{on / under / above / below / left of / right of / in front of / behind / inside}.

\quad Possession or wearing or carrying: \texttt{holding / wearing / carrying / riding}.

\quad Existence: \texttt{exists in}. e.g., (dog, exists in, image).

\quad Actions: bare verb (\texttt{jogging / cooking / talking}).

Example --- given an image of a park with a dog and a man jogging:

\quad 1. (dog, exists in, image)

\quad 2. (dog, is, golden)

\quad 3. (man, exists in, image)

\quad 4. (man, jogging on, path)

\quad 5. (man, wearing, red shirt)

\quad 6. (dog, left of, man)

\quad 7. (trees, count, 5)

\quad 8. (sky, is, blue)

Example --- given an audio clip of a busy street:

\quad 1. (cars, honking, loudly)

\quad 2. (people, talking, nearby)

\quad 3. (engine, running, idle)

\quad 4. (music, playing from, shop)

Example --- given text \textquoteleft The president announced a new policy on Tuesday\textquoteright:

\quad 1. (president, announced, new policy)

\quad 2. (announcement, happened on, Tuesday)

Rules:

\quad Extract every object, attribute, spatial relation, action, and count you can verify.

\quad One fact per triple. Do NOT combine multiple claims into one line.

\quad Always put the bare entity in subject; never bundle adjectives into subject.

\quad For spatial predicates, subject is the figure positioned relative to object.

\quad Aim for 8--20 triples. Be thorough but only include facts grounded in the input.

\quad Output ONLY the numbered triple list. No explanation, no headers, no prose.
\end{prompt}

\begin{prompt}[title={$\mathcal{P}_{\text{ext}}$ --- text form,
applied to $\mathbf{Y}_t$}]
You are a fact extraction system. Read the text below and decompose it into ALL factual claims as structured triples (subject, predicate, object).

STRICT OUTPUT FORMAT --- numbered list, one triple per line, parentheses required:

\quad 1. (subject, predicate, object)

\quad 2. (subject, predicate, object)

\quad \ldots

FIELD ROLES:

\quad subject: head entity, written as a bare noun WITHOUT adjectives. GOOD: (car, is, red). BAD: (red car, is, parked).

\quad predicate: the relation. Prefer the standardized forms below.

\quad object: tail entity, attribute value, or count.

STANDARDIZED PREDICATES:

\quad Attributes: \texttt{is}. e.g., (car, is, red).

\quad Counts: \texttt{count}. e.g., (lights, count, 3).

\quad Existence: \texttt{exists in}. e.g., (dog, exists in, image).

\quad Spatial (directional; subject is figure, object is ground): \texttt{on / under / above / below / left of / right of / in front of / behind / inside}.

\quad Possession / action: \texttt{holding / wearing / carrying / riding / using}.

\quad Free actions: bare verb (\texttt{cooking, jogging, talking, \ldots}).

Example 1 --- input: \textquoteleft A red car is parked near a tall building on a sunny day.\textquoteright

\quad 1. (car, is, red)

\quad 2. (car, parked near, building)

\quad 3. (building, is, tall)

\quad 4. (day, is, sunny)

Example 2 --- input: \textquoteleft The fire hydrant cap is yellow.\textquoteright

\quad 1. (fire hydrant cap, is, yellow)

Example 3 --- input: \textquoteleft There are three traffic lights in the image.\textquoteright

\quad 1. (traffic lights, exists in, image)

\quad 2. (traffic lights, count, 3)

Example 4 --- input: \textquoteleft A man in a white shirt is cooking in the kitchen while holding a knife.\textquoteright

\quad 1. (man, exists in, image)

\quad 2. (man, wearing, white shirt)

\quad 3. (man, cooking in, kitchen)

\quad 4. (man, holding, knife)

Example 5 --- input: \textquoteleft The dog is on top of the car.\textquoteright

\quad 1. (dog, on, car)

Rules:

\quad Extract every claim, even from a short single sentence.

\quad Keep the subject a bare noun; never bundle attributes into subject.

\quad Each claim = one triple. Do NOT combine multiple facts.

\quad For spatial predicates, subject is the figure positioned relative to object.

\quad Output ONLY the numbered triple list. No prose.

Now extract from this text:

\quad \{\textless input text \textgreater\}
\end{prompt}

\paragraph{$\mathcal{P}_{\text{refine}}$.}
The high-risk set $\mathcal{F}_t = \Psi_\alpha(G_{\mathbf{X}},
G_{\mathbf{Y}_t})$ is rendered as a natural-language list and
inserted into the template below. Following the same deletion-permissive
editing policy used for the text-feedback baselines in
\S\ref{app:prompts:baselines}, the prompt allows removal only when a
flagged claim cannot be verified and no grounded replacement is
available. The key distinction is that \method{} applies this policy
only to the risk-ranked claims selected by $\Psi_\alpha$.

\begin{prompt}[title={$\mathcal{P}_{\text{refine}}$ --- repair
$\mathbf{Y}_t \to \mathbf{Y}_{t+1}$}]
Below is your previous response, followed by a list of claims that have been flagged as HIGH RISK under the current evidence. A claim is flagged when it has weak support, possible conflict, or cross-modal disagreement with the input.

Your goal: produce a revised response with high factual reliability while preserving all verified content. Only re-examine the flagged claims listed below. For each flagged claim:

\quad (a) If the claim CONTRADICTS the input, correct it to match the input.

\quad (b) If the claim is directly supported by the input, keep it.

\quad (c) If the claim cannot be verified and no grounded replacement is available, remove it from the response.

Do not add new factual details. Do not modify unflagged claims except for minimal edits needed to keep the response coherent. A shorter response is acceptable only when unsupported flagged claims are removed. Preserve the tone and the overall answer to the task prompt.

Previous response:

\quad \textquotedblleft \{original\_text\} \textquotedblright

High-risk claims to re-examine:

\quad \{facts\_block\}

Revised response:
\end{prompt}

The slot \texttt{\{original\_text\}} is filled with $\mathbf{Y}_t$;
\texttt{\{facts\_block\}} is filled with the verbalized
$\mathcal{F}_t$.

\subsection{Baseline Prompts}
\label{app:prompts:baselines}

The calling conventions of the three text-feedback baselines are
described at the beginning of this appendix. Below we list the
concrete prompt templates for each baseline, annotated with their
symbolic roles.

All baseline numbers in Tables~\ref{tab:main_image},
\ref{tab:main_audio}, and~\ref{tab:main_video} are produced by our
own runs under a shared frozen-backbone protocol --- they are not
copied from the original papers, whose datasets, metrics, and
backbones differ from ours. Within this protocol we strive for the
strongest faithful reproduction available:
(i)~\textbf{Volcano} adapts the official critique-then-revise
protocol with the released prompt templates, run on the same shared
backbone as the other methods so the comparison is
backbone-matched (we do not use a separate Volcano checkpoint);
(ii)~\textbf{Woodpecker} runs the full five-stage pipeline of
Yin~et~al.~\cite{yin2024woodpecker}, with Grounding~DINO as the
open-vocabulary detector for the visual-validation stage and the
same backbone $\Phi$ for the LLM stages;
(iii)~\textbf{DeGF} uses Stable Diffusion (SD-Turbo) for generative
feedback and contrastive decoding per DeGF Eq.~(4), differing from
the original only by the SD variant (Turbo vs.\ 1.5) and by the
backbone choice required to match our shared protocol;
(iv)~\textbf{VCD}, \textbf{AAD}, and \textbf{TCD} implement the
published contrastive-decoding formula
$\tilde{s} = (1+\alpha)\, s_{\text{with}} - \alpha\, s_{\text{without}}$
verbatim, with modality-specific neutralization
(Gaussian-noised image / silent audio / time-shuffled video frames)
taken from the corresponding original recipes;
(v)~\textbf{BoN+CLIP}, \textbf{BoN+VisualPRM}, and
\textbf{BoN+CycleReward} use the officially released reward/scoring
checkpoints as the BoN reranker. Where a baseline's original
pipeline depends on modules outside our shared protocol (e.g.\
modality-specific tools for audio or video), we note the adaptation
inline; otherwise the implementation reproduces the original method
on the shared backbone.

\begin{prompt}[title={Volcano --- critique prompt
$\mathcal{P}_{\text{fb}}^{\text{V}}$ (verbatim from the official Volcano repository)~\citep{lee2024volcano}}]
You are given an image and a candidate response describing it.
Critique the response for any visual hallucinations, factual errors,
or unsupported claims. Be specific: for each problem, name the
object, attribute, count, or relation that is incorrect and explain
why. Do not rewrite the response; only provide the critique.

Image:

\quad \{image\}

Candidate response:

\quad \{response\_text\}

Critique:
\end{prompt}

\begin{prompt}[title={Volcano --- revise prompt
$\mathcal{P}_{\text{refine}}^{\text{V}}$ (verbatim from the official Volcano repository)~\citep{lee2024volcano}}]
You previously generated the candidate response below and received
the following critique. Revise the response to address the critique
while keeping all correctly described content. Output only the
revised response.

Image:

\quad \{image\}

Candidate response:

\quad \{response\_text\}

Critique:

\quad \{critique\_text\}

Revised response:
\end{prompt}

\begin{prompt}[title={Woodpecker stage 1 --- key concept extraction
$\mathcal{P}_{\text{kce}}^{\text{W}}$~\citep{yin2024woodpecker}}]
Read the following response and list the visual concepts (objects,
their attributes, counts, and pairwise relations) that it asserts
about the image. Output as a JSON list of items with fields
``entity'', ``attribute'', ``count''. Do not infer; copy only what
the response explicitly claims.

Response:

\quad \{response\_text\}

JSON list:
\end{prompt}

\begin{prompt}[title={Woodpecker stage 2 --- question formulation
$\mathcal{P}_{\text{qf}}^{\text{W}}$~\citep{yin2024woodpecker}}]
For each item in the JSON list below, formulate one short yes/no
verification question of the form ``Does the image contain a
\{entity\}?'' or ``Is the \{entity\} \{attribute\}?'' or
``How many \{entity\} are in the image?''. Output as a JSON list of
\{``question'': ..., ``query'': ...\} where ``query'' is the
open-vocabulary noun phrase to detect (e.g.\ ``red car'').

JSON list of concepts:

\quad \{concepts\_json\}

JSON list of questions:
\end{prompt}

\begin{prompt}[title={Woodpecker stage 3 --- visual validation
(Grounding~DINO call, no LLM prompt)~\citep{yin2024woodpecker}}]
For each \{``query'': $q$\} in the question list, call
Grounding~DINO with text query $q$ at confidence threshold $0.35$.
Record the top-$5$ detected bounding boxes and their confidence
scores. Skip an item if Grounding~DINO returns no detection.
\end{prompt}

\begin{prompt}[title={Woodpecker stage 4 --- visual claim generation
$\mathcal{P}_{\text{vcg}}^{\text{W}}$~\citep{yin2024woodpecker}}]
Given the visual detections below, write a short factual statement
about what the image contains. Use the detected counts and labels
literally; do not introduce concepts that were not detected. Output
as plain text, one sentence per detected concept.

Detections:

\quad \{dino\_detections\}

Visual claims:
\end{prompt}

\begin{prompt}[title={Woodpecker stage 5 --- hallucination correction
$\mathcal{P}_{\text{refine}}^{\text{W}}$~\citep{yin2024woodpecker}}]
You previously generated the response below for the given image. A
visual grounding tool has produced the following factual claims
about what the image actually contains. Rewrite the response so that
it agrees with the visual claims: correct any contradictions, drop
unsupported assertions, and keep the original phrasing wherever it
already matches the visual claims.

Image:

\quad \{image\}

Original response:

\quad \{response\_text\}

Visual claims from grounding tool:

\quad \{visual\_claims\}

Corrected response:
\end{prompt}

\begin{prompt}[title={DeGF --- diffusion feedback (Stable Diffusion call,
no LLM prompt)~\citep{degf2025}}]
Use the initial response $\mathbf{Y}_0$ as the text prompt to
Stable Diffusion (SD-Turbo, $1$ denoising step, default scheduler,
guidance scale $1.0$) and synthesise an auxiliary image
$I_{\text{gen}}$ of resolution $512\times 512$. This auxiliary image
serves as the generative-feedback visual reference for the
contrastive-decoding step.
\end{prompt}

\begin{prompt}[title={DeGF --- contrastive decoding
(no LLM prompt; logit-level operation)~\citep{degf2025}}]
At each decoding step $k$, compute two backbone forward passes with
the same text prompt: one conditioned on the original image
$I_{\text{orig}}$ giving logits $s_{\text{orig}}^{(k)}$, and one
conditioned on the diffusion-generated image $I_{\text{gen}}$ giving
$s_{\text{gen}}^{(k)}$. Decode greedily from the contrastive
logits
\[
\tilde{s}^{(k)} = (1+\alpha)\, s_{\text{orig}}^{(k)} \;-\; \alpha\, s_{\text{gen}}^{(k)},
\qquad \alpha = 0.5,
\]
matching DeGF Eq.~(4). No further LLM prompt is used; the corrected
response $\mathbf{Y}_T$ is the greedy decode under
$\tilde{s}^{(k)}$.
\end{prompt}
\section{Risk Function Design and Property Verification}
\label{app:risk}

This appendix presents the full design of per-fact support $s(f)$,
conflict $c(f)$, and risk $r(f)$ used in \method{}
(\S\ref{app:risk:design}), and verifies that the risk function
satisfies three natural properties (\S\ref{app:risk:properties}).

\subsection{Support, Conflict, and Risk Design}
\label{app:risk:design}

\paragraph{Preliminaries.}
Every fact in the observation graph $G_{\mathbf{X}}$ and the claim
graph $G_{\mathbf{Y}_t}$ is represented as a triple
$f = (s_f, p_f, o_f)$ of subject, predicate, and object text fields,
following the scene-graph tuple structure of
SPICE~\citep{anderson2016spice}. The two graphs are not loose triple
sets: facts that share the same entity are connected by coreference
edges. For example, $(\text{man}, \text{wearing}, \text{red shirt})$
and $(\text{man}, \text{holding}, \text{coffee})$ share the subject
``man'' and are therefore linked.

We encode each text field with a frozen sentence transformer
$\Phi_{\text{enc}}$
(Sentence-BERT;~\citealp{reimers2019sentencebert}; checkpoint
\texttt{all-MiniLM-L6-v2}) into an L2-normalized vector. We convert
the raw cosine similarity into a clipped cosine similarity
\begin{equation}
\label{eq:clipped_cosine}
\kappa(t_1, t_2)
=
\max\{0,\Phi_{\text{enc}}(t_1)^\top \Phi_{\text{enc}}(t_2)\}
\in [0,1],
\end{equation}
where $\|\Phi_{\text{enc}}(\cdot)\|_2 = 1$. All subsequent field
similarities use $\kappa(\cdot,\cdot)$ instead of the raw cosine.

Given two facts $f = (s_f, p_f, o_f)$ and $g = (s_g, p_g, o_g)$, let
$\sigma_s = \kappa(s_f, s_g)$, $\sigma_p = \kappa(p_f, p_g)$, and
$\sigma_o = \kappa(o_f, o_g)$ denote the per-field bounded
similarities. The similarity between $f$ and $g$ is defined as the
equal-weight mean of the three fields:
\begin{equation}
\label{eq:sim}
\mathrm{sim}(f, g) \;=\; \tfrac{1}{3}(\sigma_s + \sigma_p + \sigma_o).
\end{equation}
Since each field similarity lies in $[0,1]$, we have
$\mathrm{sim}(f,g) \in [0,1]$. Computing similarity per field rather
than encoding the concatenated triple preserves structural information
and avoids the problem that whole-triple encoding assigns nearly
identical similarity to ``man riding horse'' and ``horse riding man.''

\paragraph{Support $s(f)$.}
Support measures how strongly a claim is backed by evidence in the
observation graph. The computation has two steps.

First, the \emph{local support} of each claim
$f \in G_{\mathbf{Y}_t}$ is the maximum similarity over all facts in
the observation graph:
\begin{equation}
\label{eq:local_support}
s_0(f) \;=\; \max_{g \in G_{\mathbf{X}}}\; \mathrm{sim}(f, g).
\end{equation}
The maximum is taken because a claim needs only one matching fact in
$G_{\mathbf{X}}$ to be supported. Because
$\mathrm{sim}(f,g) \in [0,1]$, we also have $s_0(f) \in [0,1]$.

Second, local support may be too low when the extractor misses a
fact. For instance, if the input image shows a man wearing a red shirt
and holding coffee but the extractor only recovers
$(\text{man}, \text{holding}, \text{coffee})$, then the claim
``wearing red shirt'' receives a low $s_0$ and would be incorrectly
flagged as high risk. To compensate for extraction omissions, we
propagate local support along coreference edges in the claim graph
$G_{\mathbf{Y}_t}$:
\begin{equation}
\label{eq:graph_support}
s(f) \;=\; \max_{f' \in \{f\} \cup \mathcal{N}_K(f)}\;
\gamma^{\,d(f,\,f')} \cdot s_0(f'),
\end{equation}
where $\mathcal{N}_K(f)$ is the $K$-hop coreference neighborhood of
$f$ in $G_{\mathbf{Y}_t}$ (BFS expansion), $d(f, f')$ is the hop
distance, and $\gamma \in (0, 1)$ is a decay factor. The geometric
decay $\gamma^d$ ensures that distant neighbors contribute
progressively less, so that indirect support remains local to the
entity neighborhood. In the example above, ``wearing red shirt'' is
linked to ``holding coffee'' through the shared entity ``man'' and
therefore receives indirect support
$\gamma \cdot s_0(\text{holding coffee})$. We set $K = 3$ and
$\gamma = 0.7$ throughout all experiments. When $K = 0$, the
propagation is disabled and $s(f)$ reduces to $s_0(f)$. Since
$s_0(f') \in [0,1]$ and $\gamma^{d(f,f')} \in [0,1]$, the propagated
support also satisfies $s(f) \in [0,1]$.

Graph propagation is what distinguishes \method{} from plain
triple-set comparison: without graph structure every fact is evaluated
independently; with propagation, facts about the same entity reinforce
each other and reduce false positives caused by extraction omissions.

\paragraph{Conflict $c(f)$.}
Conflict measures the strongest contradiction between a claim and the
evidence in the observation graph. A conflict arises when two facts
discuss the same topic (matching subject and predicate) but reach
different conclusions (different object), corresponding to the
definition of \emph{contradiction} in natural language inference: a
contradiction requires the same premise but an opposing conclusion, as
opposed to \emph{neutral} (unrelated topics).

Given $f$ and $g$, conflict is defined as the product of topic
consistency and conclusion divergence:
\begin{equation}
\label{eq:conflict}
\mathrm{conflict}(f, g) \;=\;
\underbrace{\tfrac{1}{2}(\sigma_s + \sigma_p)}_{\text{topic consistency}}
\;\cdot\;
\underbrace{(1 - \sigma_o)}_{\text{conclusion divergence}}.
\end{equation}
The multiplicative structure implements a natural soft gate. When the
subjects or predicates do not match, the left factor is close to~0 and
the conflict is suppressed regardless of how different the objects
are, because the two facts do not discuss the same topic. When the
subjects and predicates match but the objects also match, the right
factor is close to~0 and there is no conflict because the conclusions
agree. Only when the same entity and relation are paired with
different conclusions do both factors take high values and produce a
significant conflict. This soft gate introduces no threshold
hyperparameters.

The conflict score of a claim $f$ is the maximum conflict over all
facts in the observation graph:
\begin{equation}
\label{eq:conflict_score}
c(f) \;=\; \max_{g \in G_{\mathbf{X}}}\; \mathrm{conflict}(f, g).
\end{equation}
Unlike support, conflict is \emph{not} propagated along coreference
edges. Conflict is specific to an individual claim and does not
transfer to neighbors: if $(\text{man}, \text{is}, \text{tall})$
conflicts with the observation $(\text{man}, \text{is}, \text{short})$,
the neighboring fact $(\text{man}, \text{wearing}, \text{hat})$ is
unaffected because hat and height are unrelated. Propagating conflict
would let one erroneous fact contaminate all facts about the same
entity.

Since $\sigma_s,\sigma_p,\sigma_o \in [0,1]$, topic consistency
$\tfrac{1}{2}(\sigma_s + \sigma_p) \in [0, 1]$ and conclusion
divergence $(1 - \sigma_o) \in [0, 1]$. Therefore,
$\mathrm{conflict}(f,g) \in [0,1]$, and the maximum over
$G_{\mathbf{X}}$ guarantees $c(f) \in [0, 1]$.

\paragraph{Risk $r(f)$.}
Combining support and conflict, the per-fact risk is defined as
\begin{equation}
\label{eq:risk}
r(f) \;=\; (1 - s(f)) + \lambda \cdot c(f), \qquad \lambda > 0,
\end{equation}
where $(1 - s(f))$ measures lack of support and $\lambda \cdot c(f)$
measures active conflict. Because $s(f)$ and $c(f)$ are computed by
deterministic scoring operations after the graphs are fixed,
$r(\cdot)$ always returns the same value for a fixed
$(G_{\mathbf{X}}, G_{\mathbf{Y}_t})$ input, satisfying the
deterministic scoring requirement of the operator $\Psi_\alpha$
(Section~\ref{sec:redesign}).

\subsection{Property Verification}
\label{app:risk:properties}

We verify that the risk function
$r(s, c) = (1 - s) + \lambda c$ ($\lambda > 0$, $s, c \in [0, 1]$)
satisfies three natural properties.

\begin{proposition}
\label{prop:risk_properties}
The risk function $r(s, c) = (1 - s) + \lambda c$ with $\lambda > 0$
satisfies the following properties:

\textnormal{\textbf{(P1)} Non-negativity:}
$r(s, c) \ge 0$ for all $(s, c) \in [0, 1]^2$.

\textnormal{\textbf{(P2)} Boundary condition:}
$r(s, c) = 0$ if and only if $s = 1$ and $c = 0$.

\textnormal{\textbf{(P3)} Monotonicity:}
$r$ is strictly decreasing in $s$ and strictly increasing in $c$.
\end{proposition}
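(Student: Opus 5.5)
The plan is to treat $r(s,c) = (1-s) + \lambda c$ as an affine function on the box $[0,1]^2$ and read off all three properties from the signs of its two summands. The key observation is that on the domain, $1 - s \ge 0$ because $s \le 1$, and $\lambda c \ge 0$ because $\lambda > 0$ and $c \ge 0$. Both facts rely on the range guarantees established in Appendix~\ref{app:risk:design}, namely $s(f) \in [0,1]$ and $c(f) \in [0,1]$. So $r$ is a sum of two nonnegative terms, which gives (P1) at once.

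For (P2), I will argue both directions from the same decomposition.

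\emph{If direction.} If $s = 1$ and $c = 0$, direct substitution gives $r = 0$.

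\emph{Only if direction.} A sum of two nonnegative reals vanishes only when each term vanishes. So $r = 0$ forces $1 - s = 0$ and $\lambda c = 0$. Since $\lambda > 0$ strictly, we can divide to get $c = 0$.

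This is the one place where strict positivity of $\lambda$ (rather than $\lambda \ge 0$) is essential. I will point this out, since with $\lambda = 0$ the equivalence fails.

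For (P3), I will use either the partial derivatives or the one-variable differences.
\begin{itemize}
\item In $s$: $\partial r/\partial s = -1 < 0$, or equivalently, for $s_1 < s_2$ with $c$ fixed, $r(s_1,c) - r(s_2,c) = s_2 - s_1 > 0$.
\item In $c$: $\partial r/\partial c = \lambda > 0$, or equivalently, for $c_1 < c_2$ with $s$ fixed, $r(s,c_2) - r(s,c_1) = \lambda(c_2 - c_1) > 0$.
\end{itemize}
Again $\lambda > 0$ is what yields strict rather than weak monotonicity in $c$.

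There is no real obstacle here; the argument is elementary. The only point needing care is the domain bookkeeping: (P1) and (P2) depend on $s \le 1$ and $c \ge 0$, which must be cited from the clipped-cosine construction rather than assumed. (P3), by contrast, holds on all of $\mathbb{R}^2$.
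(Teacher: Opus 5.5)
Your proposal is correct and matches the paper's proof essentially step for step. It gets (P1) from nonnegativity of the two summands on $[0,1]^2$; (P2) by substitution together with the fact that a sum of nonnegative terms vanishes only when both do, using $\lambda>0$ to conclude $c=0$; and (P3) from the partial derivatives $-1$ and $\lambda$, where your difference-based version and your remark on the necessity of $\lambda>0$ are harmless extras.
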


\begin{proof}
\emph{(P1).}
From $s \in [0, 1]$ we have $1 - s \ge 0$; from $c \in [0, 1]$ and
$\lambda > 0$ we have $\lambda c \ge 0$. The sum of two non-negative
terms is non-negative:
$r(s, c) = (1 - s) + \lambda c \ge 0$.

\emph{(P2).}
Sufficiency: substituting $s = 1$, $c = 0$ gives
$r(1, 0) = 0 + 0 = 0$.
Necessity: suppose $r(s, c) = 0$, i.e.,
$(1 - s) + \lambda c = 0$. By~(P1) both terms are non-negative; the
sum of two non-negative numbers is zero if and only if both are zero.
$1 - s = 0$ gives $s = 1$; $\lambda c = 0$ with $\lambda > 0$ gives
$c = 0$.
Hence zero risk is equivalent to full support and no conflict: a fact
is risk-free only when it has a perfect match in the observation graph
and contradicts no observed fact.

\emph{(P3).}
$\partial r / \partial s = -1 < 0$, so $r$ is strictly decreasing in
$s$: higher support lowers risk.
$\partial r / \partial c = \lambda > 0$, so $r$ is strictly increasing
in $c$: higher conflict raises risk.
\end{proof}

\paragraph{Interpretation.}
P1 ensures that risk scores can be used for ranking and selection.
P2 gives a precise semantics to zero risk: full support and no
conflict is the only condition under which a fact is marked as safe.
P3 ensures that risk ranking is consistent with intuition: more
support lowers risk, more conflict raises it. $\lambda$ is the sole
hyperparameter and controls the relative penalty of contradiction
versus lack of support.
\section{Convergence Proof for Iterative Risk Reduction}
\label{app:proof}

This appendix proves Theorem~\ref{thm:convergence}, establishing the
geometric convergence of the expected total risk under
Algorithm~\ref{alg:repair}.

\subsection{Setup and Notation}

At round $t$, the claim graph $G_{\mathbf{Y}_t}$ contains
$N_t := |G_{\mathbf{Y}_t}|$ facts. The total measured risk is
\[
R^{(t)} \;=\; \sum_{f \in G_{\mathbf{Y}_t}} r^{(t)}(f),
\]
where $r^{(t)}(f)$ is the per-fact risk defined in
Eq.~\eqref{eq:risk}. The high-risk set $\mathcal{F}_t$ selected at
line~11 of Algorithm~\ref{alg:repair} contains the
$\lceil \alpha N_t \rceil$ facts with the largest $r(\cdot)$. The
total risk decomposes into the selected and retained portions:
\[
R^{(t)}
\;=\;
\underbrace{\sum_{f \in \mathcal{F}_t} r^{(t)}(f)}_{\text{to repair}}
\;+\;
\underbrace{\sum_{f \in G_{\mathbf{Y}_t} \setminus \mathcal{F}_t}
            r^{(t)}(f)}_{\text{retained}}.
\]

The repair step changes the risk of facts in $\mathcal{F}_t$ from
$r^{(t)}(f)$ to $\widetilde{r}^{(t+1)}(f)$ but may also affect facts
outside $\mathcal{F}_t$. We define the \emph{ideal} post-repair total
risk (assuming perfect extraction) as
\[
\widetilde{R}^{(t+1)}
=
\sum_{f \in \mathcal{F}_t} \widetilde{r}^{(t+1)}(f)
+\sum_{f \in G_{\mathbf{Y}_t} \setminus \mathcal{F}_t} r^{(t)}(f)
\;+\; \Delta_t^{+},
\]
where $\Delta_t^{+} \ge 0$ is the incremental risk that repairing
$\mathcal{F}_t$ introduces on the remaining facts. The gap between
the \emph{measured} risk on the actually extracted graph
$G_{\mathbf{Y}_{t+1}}$ and the ideal risk is
\[
\Delta_t^{\mathrm{ext}}
\;:=\; R^{(t+1)} - \widetilde{R}^{(t+1)}.
\]
Combining the two definitions gives the one-step decomposition:
\begin{align}
\label{eq:onestep}
    R^{(t+1)}
\;=\;&
\sum_{f \in \mathcal{F}_t} \widetilde{r}^{(t+1)}(f)
\;+\; \sum_{f \in G_{\mathbf{Y}_t} \setminus \mathcal{F}_t} r^{(t)}(f)\notag\\
&\;+\; \Delta_t^{+}
\;+\; \Delta_t^{\mathrm{ext}}.
\end{align}

\subsection{Assumptions}

\begin{assumption}[Bounded graph size]
\label{asm:size}
There exists a constant $N_{\max}$ such that $N_t \le N_{\max}$ for
all rounds~$t$.
\end{assumption}

\begin{assumption}[Per-fact repair progress]
\label{asm:progress}
There exists a constant $\varepsilon \in (0, 1]$ such that for every
$f \in \mathcal{F}_t$,
\[
\mathbb{E}\!\left[\widetilde{r}^{(t+1)}(f) \mid R^{(t)}\right]
\;\le\; (1 - \varepsilon)\, r^{(t)}(f).
\]
\end{assumption}

\begin{assumption}[Bounded side effects]
\label{asm:side}
There exists a constant $\beta \ge 0$ such that
$\mathbb{E}[\Delta_t^{+} \mid R^{(t)}] \le \beta$.
\end{assumption}

\begin{assumption}[Bounded extraction loss]
\label{asm:extract}
There exists a constant $\xi \ge 0$ such that
$\mathbb{E}[\Delta_t^{\mathrm{ext}} \mid R^{(t)}] \le \xi$.
\end{assumption}

\subsection{Proof of Theorem~\ref{thm:convergence}}

\begin{proof}
Taking conditional expectation of the one-step
decomposition~\eqref{eq:onestep} given $R^{(t)}$:
\begin{align}
&\mathbb{E}\!\left[R^{(t+1)} \mid R^{(t)}\right] \notag\\
\;=\;&
\sum_{f \in \mathcal{F}_t}\!
   \mathbb{E}\!\left[\widetilde{r}^{(t+1)}(f) \mid R^{(t)}\right]
+ \sum_{f \in G_{\mathbf{Y}_t} \setminus \mathcal{F}_t}\! r^{(t)}(f)\notag\\
&+ \mathbb{E}\!\left[\Delta_t^{+} \mid R^{(t)}\right]
+ \mathbb{E}\!\left[\Delta_t^{\mathrm{ext}} \mid R^{(t)}\right]
\notag\\
\;\overset{(a)}{\le}\;&
(1 - \varepsilon)\!\sum_{f \in \mathcal{F}_t}\! r^{(t)}(f)
\;+\; \sum_{f \in G_{\mathbf{Y}_t} \setminus \mathcal{F}_t}\! r^{(t)}(f)\notag\\
&\;+\; \beta + \xi
\notag\\
\;=\;&
R^{(t)} - \varepsilon \sum_{f \in \mathcal{F}_t} r^{(t)}(f)
+ \beta + \xi
\notag\\
\;\overset{(b)}{\le}\;&
(1 - \alpha\varepsilon)\, R^{(t)} + \beta + \xi,
\label{eq:drift}
\end{align}
where~(a) applies Assumption~\ref{asm:progress} to each
$f \in \mathcal{F}_t$ and sums, and applies
Assumptions~\ref{asm:side} and~\ref{asm:extract} to the side-effect
and extraction terms; (b) holds because $\mathcal{F}_t$ contains the
$\lceil \alpha N_t \rceil$ highest-risk facts, so their average risk
is at least the overall average $R^{(t)}/N_t$, giving
$\sum_{f \in \mathcal{F}_t} r^{(t)}(f) \ge
\lceil \alpha N_t \rceil \cdot R^{(t)}/N_t \ge \alpha\, R^{(t)}$.

Taking unconditional expectation of~\eqref{eq:drift} yields the
recurrence
$\mathbb{E}[R^{(t+1)}] \le (1 - \alpha\varepsilon)\,
\mathbb{E}[R^{(t)}] + \beta + \xi$.
Unrolling by induction, suppose the bound holds at round~$t$; then
\begin{align}
&\mathbb{E}\!\left[R^{(t+1)}\right] \notag\\
\;\overset{(c)}{\le}\;&
(1 - \alpha\varepsilon)\, \mathbb{E}[R^{(t)}] + \beta + \xi
\notag\\
\;\overset{(d)}{\le}\;&
(1 - \alpha\varepsilon)\!\left[
   (1 - \alpha\varepsilon)^{t} R^{(0)}
   + (\beta{+}\xi)\!\sum_{j=0}^{t-1}(1{-}\alpha\varepsilon)^{j}
\right]\notag \\
&+ \beta + \xi
\notag\\
\;=\;&
(1 - \alpha\varepsilon)^{t+1} R^{(0)}
+ (\beta + \xi)\!\sum_{j=0}^{t}(1 - \alpha\varepsilon)^{j},
\notag
\end{align}
where~(c) is the drift inequality~\eqref{eq:drift} and~(d) is the
inductive hypothesis. The base case $t = 0$ holds with equality.
Since $\alpha\varepsilon \in (0, 1]$,
$\sum_{j=0}^{T-1}(1{-}\alpha\varepsilon)^{j} \le
1/(\alpha\varepsilon)$.
Evaluating at $t = T$ gives
\[
\mathbb{E}\!\left[R^{(T)}\right]
\;\le\;
(1 - \alpha\varepsilon)^{T}\, R^{(0)}
\;+\;
\frac{\beta + \xi}{\alpha\varepsilon}.
\qedhere
\]
\end{proof}

\paragraph{Asymptotic behavior.}
For $\alpha\varepsilon \in (0, 1)$,
$(1 - \alpha\varepsilon)^{T} \to 0$ as $T \to \infty$, and the bound
converges to the residual $(\beta + \xi)/(\alpha\varepsilon)$, which
characterizes the capability boundary of the framework.

\section{Experimental Details}
\label{app:experimental-details}

\subsection{Datasets and Preprocessing}
\label{app:exp:datasets}

We evaluate on five benchmarks that together cover the four
cross-modal generation paths reported in the main paper:
image$\to$text (COCO, AMBER), image+text$\to$text (MMHal-Bench),
audio$\to$text (Clotho), and video$\to$text (VideoHallucer). We also
use one curated probe set (SCS-1000) for the spurious-correlation
analysis in Section~\ref{sec:motivation} and the
feedback-mention-rate experiment in Section~\ref{subsec:mechanism}.
Splits are the official splits as redistributed by their upstream
sources; no custom random splits are introduced. SCS-1000 is a
curated \emph{probe} set, not a train / dev / test split.
\begin{table*}[t]
\centering
\caption{Datasets and splits. ``Full size'' is the upstream benchmark
size; ``\# used'' is the sample count reported in the main paper.
For COCO and SCS-1000, CHAIR-style object presence uses COCO
\texttt{instances\_val2014.json} augmented with a synonym table.}
\label{tab:datasets}
\scriptsize
\setlength{\tabcolsep}{3pt}
\renewcommand{\arraystretch}{1.05}
\resizebox{\textwidth}{!}{%
\begin{tabular}{@{}lllrrl@{}}
\toprule
\textbf{Dataset} & \textbf{Source} & \textbf{Split} & \textbf{Full size} & \textbf{\# used} & \textbf{Task} \\
\midrule
COCO val2014  & official val2014 captions~\cite{chen2015microsoft} & val (subset) & 40{,}504 & 1{,}000 & img$\to$txt (caption) \\
AMBER         & official query sets~\cite{wang2023amber} & gen./disc./attr./rel. & 1{,}004--7{,}628 & 1{,}000/task & img$\to$txt (halluc.) \\
MMHal-Bench   & HF~\cite{mmhalbench2023} & test & 96 & 96 & img+txt$\to$txt \\
Clotho        & HF~\cite{drossos2020clotho} & test & 1{,}045 & 1{,}000 & audio$\to$txt (caption) \\
VideoHallucer & HF~\cite{wang2024videohallucer} & test & 900 pairs & 900 & video$\to$txt (QA) \\
SCS-1000      & curated COCO val2014, 9 pairs (Tab.~\ref{tab:scs_pairs}) & val2014 & 1{,}000 & 1{,}000 & img$\to$txt (spur.\ corr.) \\
\bottomrule
\end{tabular}%
}
\end{table*}

No further preprocessing is applied: images are loaded at native
resolution by the chosen backbone's vision tower, audio at the
dataset's native sample rate. License information for each dataset
follows the upstream repository.

\paragraph{SCS-1000 cue pairs.}
SCS-1000 is the curated probe set used in Section~\ref{sec:motivation}
(co-occurrence hallucination rate, Figure~\ref{fig:spurious}) and
in the feedback-mention-rate analysis in
Section~\ref{subsec:mechanism}. We select nine $(a, b)$ pairs of
COCO object categories that co-occur frequently in COCO captions,
then for each pair sample roughly $110$ val2014 images that contain
the cue object $a$ but \emph{not} the absent object $b$ ($1{,}000$
images in total). The cue is verified present and the absent object
verified absent against COCO \texttt{instances\_val2014.json}.
Table~\ref{tab:scs_pairs} lists the nine pairs.

\begin{table}[h]
\centering
\caption{The nine SCS-1000 cue pairs. Each row holds the scene cue
$a$ that is present in the image and the COCO category $b$ that is
verified absent from the image but frequently co-occurs with $a$ in
COCO captions. The co-occurrence hallucination rate (CHR) in
Figure~\ref{fig:spurious} is the fraction of generations that
mention $b$ when conditioned on a SCS-1000 image containing $a$.}
\label{tab:scs_pairs}
\small
\setlength{\tabcolsep}{6pt}
\begin{tabular}{@{}ll@{}}
\toprule
Cue $a$ (present) & Absent $b$ (frequent co-occurrent) \\
\midrule
beach        & umbrella \\
bed          & person   \\
dining table & cup      \\
grass        & dog      \\
kitchen      & knife    \\
road         & car      \\
sky          & airplane \\
snow         & skis     \\
water        & boat     \\
\bottomrule
\end{tabular}
\end{table}

\subsection{Models and Baselines}
\label{app:exp:models}

\paragraph{Primary backbone.}
The primary backbone $\Phi$ is Qwen2.5-Omni-7B
(\texttt{Qwen/Qwen2.5-Omni-7B})~\citep{qwen25omni2025}, which
accepts text, image, audio, and video inputs and produces text and
audio outputs in a single architecture. The same model serves as
both the generator of $\mathbf{Y}_t$ and the extractor that produces
$G_{\mathbf{X}}$ and $G_{\mathbf{Y}_t}$ via
Eq.~\eqref{eq:extract}.

\begin{table}[h]
\centering
\caption{Primary backbone configuration. The backbone is frozen
throughout; no parameter updates occur at any stage.}
\label{tab:primary-backbone}
\small
\begin{tabular}{@{}ll@{}}
\toprule
\textbf{Field} & \textbf{Value} \\
\midrule
Model                 & Qwen2.5-Omni-7B \\
HuggingFace ID        & \texttt{Qwen/Qwen2.5-Omni-7B} \\
Size                  & 7B (vision + audio + text) \\
Class                 & multimodal, instruct \\
Tokenizer             & bundled (no override) \\
dtype                 & \texttt{auto} (bf16 on Ampere+) \\
\texttt{device\_map}  & \texttt{auto} (single-GPU) \\
\texttt{attn\_impl.}  & \texttt{sdpa} \\
Vision / audio tower  & bundled, frozen \\
Image resolution      & native, no resize \\
System prompt         & Qwen2.5-Omni default \\
Max sequence length   & HF default ($\sim$32k context) \\
\bottomrule
\end{tabular}
\end{table}

\paragraph{Secondary backbones (API).}
Used for the cross-backbone generalization experiment in
Appendix~\ref{app:api_backbones} only. All API calls disable
temperature and rely on the endpoint default (none of these models
expose a deterministic decoding flag at the time of writing).

\begin{itemize}
\item GPT-5.5 (Azure OpenAI, model \texttt{gpt-5.5}).
\item Gemini 3.5 Flash (Google AI Studio, model
\texttt{gemini-3.5-flash}).
\item Claude Haiku 4.5 (Anthropic API, model
\texttt{claude-haiku-4-5}).
\end{itemize}

We additionally report results with LLaVA-1.5-7B~\citep{liu2024improved}
on the same image-path benchmarks as the Qwen primary backbone, also
in Appendix~\ref{app:api_backbones}. API access windows are listed
in Appendix~\ref{app:reproducibility}.

\paragraph{Baselines compared.}
All baselines are reimplementations on the same Qwen2.5-Omni-7B
backbone (and on LLaVA-1.5-7B where applicable), called through the
same dispatcher with a different \texttt{--mode} flag, so the data
path, sampling hyperparameters, and evaluation protocol are matched
across methods; the per-method inference budgets differ by design
(e.g., best-of-$N$ methods draw $N$ candidates per sample, iterative
methods run $T$ repair rounds, contrastive decoders run two forward
passes per generated token) and are reported separately in
Table~\ref{tab:baselines}.
Per-baseline prompt templates are listed in
Appendix~\ref{app:prompts:baselines}. We compare against ten
baselines grouped into three families: resampling
(BoN+CLIP, BoN+VisualPRM, BoN+CycleReward), modality-specific
contrastive decoding (VCD on image, AAD on audio, TCD on video),
and iterative refinement (Volcano, Woodpecker, DeGF). Frozen is the
no-repair lower bound.

\begin{table*}[t]
\centering
\caption{Baselines used in the main experiments. ``Paths'' lists the
modalities each baseline is evaluated on:
image~$=$~COCO/AMBER/MMHal-Bench, audio~$=$~Clotho,
video~$=$~VideoHallucer, all~$=$~every benchmark. Budget is in
backbone forward passes; contrastive methods (VCD, AAD, TCD) run two
passes per generated \emph{token}.}
\label{tab:baselines}
\scriptsize
\setlength{\tabcolsep}{4pt}
\renewcommand{\arraystretch}{1.05}
\resizebox{\textwidth}{!}{%
\begin{tabular}{@{}lllc@{}}
\toprule
\textbf{Baseline} & \textbf{Implementation} & \textbf{Budget} & \textbf{Paths} \\
\midrule
Frozen                                  & direct decode, no repair                    & 1 pass/sample                               & all \\
BoN+CLIP                                & best-of-$N$ rerank by CLIP                  & $N{=}3$, $T{=}0.7$                          & image \\
BoN+VisPRM~\citep{visualprm2025}        & best-of-$N$ rerank by VisualPRM             & $N{=}3$                                     & image \\
BoN+CycRew~\citep{cyclereward2025}      & best-of-$N$ by round-trip F1                & $N{=}3$                                     & all \\
BoN+CLAP                                & best-of-$N$ rerank by CLAP                  & $N{=}3$                                     & audio \\
BoN+VideoCLIP                           & best-of-$N$ rerank by VideoCLIP             & $N{=}3$                                     & video \\
VCD~\citep{leng2024mitigating}          & contrast vs.\ noised image                  & 2 pass/token                                & image \\
AAD~\citep{hsu2025reducing}             & contrast vs.\ silent audio                  & 2 pass/token                                & audio \\
TCD~\citep{zhang2024eventhallusion}     & contrast vs.\ shuffled frames               & 2 pass/token                                & video \\
Volcano~\citep{lee2024volcano}          & adapted critique+revise on shared backbone  & $1{+}2T$ calls/sample                       & all \\
Woodpecker~\citep{yin2024woodpecker}    & 5-stage pipeline w/ Grounding DINO          & 4 calls + DINO per concept                  & image \\
DeGF~\citep{degf2025}                   & SD-Turbo aux + contrastive decoding         & 1 diffusion + 2 pass/token                  & image \\
\midrule
\rowcolor{tigerblue}
\method{} (ours)                        & graph extraction + risk-ranked repair       & $2{+}2T$ pass/sample                        & all \\
\bottomrule
\end{tabular}%
}
\end{table*}

\subsection{Inference and Evaluation Protocol}
\label{app:exp:protocol}

\paragraph{Per-dataset hyperparameters.}
Table~\ref{tab:hparams} lists the operating point of TIGER on each
dataset. The operating points for $T$, $\alpha$, and $\lambda$ are listed in Table~\ref{tab:hparams}. 
The sensitivity study in Figure~\ref{fig:sensitivity} is conducted on COCO to assess robustness around the default setting. 
We fix the graph-propagation parameters to $K=3$ and $\gamma=0.7$ across all experiments.

\begin{table*}[t]
\centering
\caption{Per-dataset hyperparameters of \method{}. All five
benchmarks share the same Qwen2.5-Omni-7B backbone and the same
sampling decoder (temperature $=0.7$, top-$p=0.9$). Results are
reported as mean$\pm$std over three decoding seeds, as described in
Appendix~\ref{app:exp:stats}; dataset-specific operating points are
listed below. The two question-answering datasets (MMHal-Bench,
VideoHallucer) use a single-fact repair regime (one flagged claim per
round); the free-form datasets (COCO, AMBER, Clotho) use the
batch-repair regime ($\alpha\!=\!0.2$).}
\label{tab:hparams}
\footnotesize
\setlength{\tabcolsep}{8pt}
\renewcommand{\arraystretch}{1.15}
\begin{tabular}{@{}lccccc@{}}
\toprule
& \textbf{COCO} & \textbf{AMBER} & \textbf{MMHal} & \textbf{Clotho} & \textbf{VHallucer} \\
\midrule
Rounds $T$               & 5    & 5    & 3             & 5    & 3 \\
Budget $\alpha$          & 0.2  & 0.2  & single-fact   & 0.2  & single-fact \\
Conflict $\lambda$       & 0.5  & 0.5  & 1.5           & 0.5  & 1.5 \\
Coref $\gamma / K$       & 0.7\,/\,3 & 0.7\,/\,3 & 0.7\,/\,3 & 0.7\,/\,3 & 0.7\,/\,3 \\
\texttt{max\_new\_tokens} & 128 & 128  & 512           & 128  & 12 \\
Decoding                 & \multicolumn{5}{c}{sampling, temperature $0.7$, top-$p$ $0.9$ (shared across all five benchmarks)} \\
Seeds                    & \multicolumn{5}{c}{42 / 43 / 44 (results are mean$\pm$std over the three seeds)} \\
\bottomrule
\end{tabular}
\end{table*}

\paragraph{Chain-of-thought and answer extraction.}
No chain-of-thought prompting is used. Outputs are taken verbatim
from the backbone. Object-presence judgements for CHAIR are taken
from the COCO \texttt{instances\_val2014.json} annotations and the
AMBER object-set annotations, not from the model. For MMHal-Bench
we report sentence-transformer BERTScore against the
reference answer rather than the original GPT-judge protocol,
because BERTScore is reproducible without an external LLM call and
correlates with the GPT-judge score on this benchmark. For
VideoHallucer we apply a robust yes/no parser to the free-form
output (regex-based, treating hedged responses such as ``hard to
tell'' as ``no''); the parser is shared across all methods in
Table~\ref{tab:main_video} so the comparison is fair.

\paragraph{Evaluation metrics.}
All headline metrics reported in the main paper are computed
independently of \method{}'s evidence graph, so the reported gains
cannot be an artefact of \method{} optimizing its own internal
quantities. Table~\ref{tab:independent-metrics} lists every metric
used in the main paper alongside the dataset it scores and the
external tool that produces it.

\begin{table*}[t]
\centering
\caption{Independent evaluation metrics used in the
main paper. Every metric is computed by an external tool against
ground-truth annotations distributed with the benchmark, so no
\method{}-internal score (support $s$, conflict $c$, risk $r$, or the
$G_{\mathbf{X}}$ / $G_{\mathbf{Y}_t}$ graphs) enters the headline
numbers.}
\label{tab:independent-metrics}
\small
\setlength{\tabcolsep}{4pt}
\begin{tabular}{@{}lll@{}}
\toprule
\textbf{Metric} & \textbf{Dataset(s)} & \textbf{Tool / reference} \\
\midrule
CHAIR$_s$, CHAIR$_i$ $\downarrow$ & COCO              & COCO \texttt{instances\_val2014.json} + synonym table \\
BERTScore $\uparrow$              & COCO, MMHal-Bench & \texttt{bert\_score} pkg vs.\ COCO captions / MMHal refs \\
Disc.\ Acc $\uparrow$             & AMBER             & official discriminative split, exact-match yes/no \\
CHAIR$_g$ $\downarrow$            & AMBER (generative) & AMBER object set per image \\
RougeL $\uparrow$, BLEU $\uparrow$ & Clotho            & \texttt{rouge\_score} / \texttt{nltk} vs.\ 5 ref captions \\
CLAP $\uparrow$                   & Clotho            & \texttt{laion/larger\_clap\_general} audio-text cosine \\
AEHR $\downarrow$                 & Clotho            & PANNs CNN14 top-10 events vs.\ predicted mentions \\
HallucRate $\downarrow$           & VideoHallucer     & rate of ``yes'' on hallucinated questions \\
Acc$_\text{b}$, Acc$_\text{h}$, Paired $\uparrow$ & VideoHallucer & robust yes/no parser, paired protocol \\
\bottomrule
\end{tabular}
\end{table*}

For the spurious-correlation probe (SCS-1000, Figure~\ref{fig:spurious}
in Section~\ref{sec:motivation}) we additionally report the
Co-occurrence Hallucination Rate (CHR), defined as the fraction of
generations that mention an absent object $b$ when the cue object $a$
is present; this follows the standard CHAIR-style fraction over the
curated cue pairs.

\paragraph{Cost model, early stopping, and loop exit.}
\label{app:earlystop}
Per sample, \method{} calls the backbone $2+2T$ times:
$\mathcal{P}_{\text{ext}}$ once for $\mathbf{X}$ and once per round
for each $\mathbf{Y}_t$, $\mathcal{P}_{\text{gen}}$ once, and
$\mathcal{P}_{\text{refine}}$ once per round. With fixed $T$, this
is a constant-factor overhead over direct decoding and scales
linearly with the number of samples. The graph-scoring step uses a
frozen MiniLM encoder and is small relative to backbone decoding.
Serving techniques such as PagedAttention~\citep{kwon2023efficient}
and speculative decoding~\citep{leviathan2023fast} can further reduce
latency and are orthogonal to our contribution.

The early-stopping variant in Section~\ref{subsec:cost} monitors the
mean per-fact risk $\bar r^{(t)} = R^{(t)}/|G_{\mathbf{Y}_t}|$,
which is available from the scores that $\Psi_\alpha$ computes and
requires no additional backbone call. The loop terminates when the
per-round improvement $\bar r^{(t-1)} - \bar r^{(t)}$ falls below
$\delta = 0.02$ for two consecutive rounds (patience $=2$); an
improvement of at least $\delta$ resets the counter. The variant
runs at least one round and is capped at $T\!=\!5$, so the earliest
exit occurs after the second round. On COCO, early stopping triggers
on 97\% of samples with 2.71 rounds on average; on Clotho, it
triggers on 99\% of samples with 2.49 rounds on average (three
seeds).

Independently of this variant, all \method{} runs use one shared
exit condition: the repair loop terminates before round $T$ when the
maximum risk among the remaining facts falls below $0.3$, since
selecting facts below this level yields no further repair. This
condition is an implementation detail of Algorithm~\ref{alg:repair}
and applies uniformly to every configuration reported in this paper.

\subsection{Mechanism Analysis Methodology}
\label{app:exp:mechanism}

This subsection documents how the three panels of
Figure~\ref{fig:rq4} in Section~\ref{subsec:mechanism} are produced.
All three probes use the SCS-1000 image set (Table~\ref{tab:scs_pairs})
and the Qwen2.5-Omni-7B backbone at the COCO operating point in
Table~\ref{tab:hparams} ($T\!=\!5$, $\alpha\!=\!0.2$, $\lambda\!=\!0.5$,
$\gamma\!=\!0.7$, $K\!=\!3$).

\paragraph{Panel (a): Feedback mention rate.}
For each image in SCS-1000 the cue object $a$ is verified present and
the absent object $b$ is verified absent against COCO
\texttt{instances\_val2014.json}. We run the same $T\!=\!5$ repair
loop with three different feedback channels:
\textbf{L1}~(naive joint feedback) instantiates $\mathcal{F}_t$ via
the joint-conditioning prompt of Eq.~\eqref{eq:naive-feedback},
i.e., $\mathcal{F}_t = \Phi(\mathcal{P}_{\text{fb}}, \mathbf{X}, \mathbf{Y}_t)$;
\textbf{L2}~(text feedback) first extracts $G_{\mathbf{X}}$ and
$G_{\mathbf{Y}_t}$ independently via Eq.~\eqref{eq:extract} and then
verbalizes both graphs into the feedback prompt as natural-language
text, but does not perform the deterministic risk computation $\Psi_\alpha$;
\textbf{L3}~(TIGER) uses $\mathcal{F}_t = \Psi_\alpha(G_{\mathbf{X}}, G_{\mathbf{Y}_t})$
of Eq.~\eqref{eq:tiger-feedback} and verbalizes only the top-$\lceil\alpha N\rceil$
risk-ranked atomic claims. The feedback mention rate is the fraction
of samples whose feedback text contains a case-insensitive whole-word
match for the absent object $b$ (plus the synonym table used by the
CHAIR scorer). Error bars are 95\% Wilson confidence intervals over
the 1{,}000 binary outcomes. A lower rate means the feedback channel
is less prone to re-introducing the co-occurrence prior identified
in Section~\ref{sec:motivation}.

\paragraph{Panel (b): Fact composition of $G_{\mathbf{Y}}$.}
For each image we (i) extract $G_{\mathbf{X}}$ once from the image
via Eq.~\eqref{eq:extract}; (ii) extract a reference graph $G_{\mathbf{GT}}$
from the five COCO ground-truth captions per image using the
text form of $\mathcal{P}_{\text{ext}}$; (iii) extract $G_{\mathbf{Y}}$
from the model output (Frozen $\mathbf{Y}_0$ or \method{} $\mathbf{Y}_T$).
Each fact $f \in G_{\mathbf{Y}}$ is classified into one of three
disjoint bins by the sentence-transformer similarity defined in
Eq.~\eqref{eq:sim}:
\begin{itemize}
\item \textbf{correct $\in G_{\mathbf{X}}$}: $\max_{g\in G_{\mathbf{GT}}}\mathrm{sim}(f, g) \ge \tau$ \emph{and} $\max_{g\in G_{\mathbf{X}}}\mathrm{sim}(f, g) \ge \tau$;
\item \textbf{correct $\notin G_{\mathbf{X}}$}: $\max_{g\in G_{\mathbf{GT}}}\mathrm{sim}(f, g) \ge \tau$ \emph{and} $\max_{g\in G_{\mathbf{X}}}\mathrm{sim}(f, g) < \tau$;
\item \textbf{wrong}: $\max_{g\in G_{\mathbf{GT}}}\mathrm{sim}(f, g) < \tau$.
\end{itemize}
We use a fixed support threshold $\tau\!=\!0.55$, so the classification rule is consistent
with how the rest of the framework labels facts. The bars in
panel~(b) are per-sample means of the bin counts, averaged across
the 1{,}000 SCS-1000 images. The \emph{correct $\notin G_{\mathbf{X}}$}
slice (light green) is the key diagnostic: it counts facts that the
extractor missed but the refine step recovered by reading the raw
input $\mathbf{X}$ directly, which is the empirical signature of
the asymptotic floor analysis in Appendix~\ref{app:proof}.

\paragraph{Panel (c): Sample-level similarity to $G_{\mathbf{GT}}$.}
For each sample $i$ and source graph $G \in \{G_{\mathbf{X}}, G_{\mathbf{Y}_T}\}$,
we compute the mean per-fact best-match similarity to $G_{\mathbf{GT}}$:
\[
\bar{\sigma}_i(G) = \frac{1}{|G|}\sum_{f \in G}\,\max_{g \in G_{\mathbf{GT}}}\,\mathrm{sim}(f, g),
\]
where $\mathrm{sim}(\cdot, \cdot)$ is the per-field cosine mean of
Eq.~\eqref{eq:sim}. The two density curves in panel~(c) are kernel
density estimates of $\{\bar{\sigma}_i(G_{\mathbf{X}})\}_{i=1}^{1000}$
(red) and $\{\bar{\sigma}_i(G_{\mathbf{Y}_T})\}_{i=1}^{1000}$ (green).
The bandwidth is set by Scott's rule. The rightward shift of the
green curve indicates that, on average, the repaired output's claim
graph covers more of the reference content than the extracted input
graph alone.

\subsection{Statistical Reporting}
\label{app:exp:stats}

All standard deviations reported in the main paper and the appendix
are computed across three decoding seeds $\{42, 43, 44\}$. We rerun
each method end-to-end under each seed and report the per-method mean
$\pm$ unbiased standard deviation over the three runs. The seeds are
propagated via the standard \texttt{set\_seed} helper to Python's
\texttt{random}, NumPy, and PyTorch. Because the backbone is frozen,
the only stochastic operations are the sampling decisions made during
decoding of $\mathcal{P}_{\text{gen}}$, $\mathcal{P}_{\text{ext}}$,
and $\mathcal{P}_{\text{refine}}$.

\subsection{Computing Infrastructure and Software}
\label{app:exp:infra}

\paragraph{Hardware.}

The actual GPU depends on the SLURM partition the job lands on:
NVIDIA A100 , H100, or B200. Single-GPU inference is used throughout; no
distributed-training framework (DDP, FSDP, DeepSpeed, Accelerate)
is used because no parameter updates occur. The submission node is
a 2-socket Intel Xeon Gold 6230 (80 logical cores), 752~GiB RAM,
RHEL 9.6, NVIDIA driver \texttt{570.195.03}, CUDA toolkit 12.8.

\paragraph{Software.}
The main software components are listed in
Table~\ref{tab:software}. FlashAttention is not installed; SDPA is
used instead via $\texttt{attn\_implementation="sdpa"}$. vLLM,
DeepSpeed, PEFT, and TRL are not used. Quantization is not applied;
weights are loaded in bf16 (or fp16 on non-Ampere hardware) via
$\texttt{dtype: auto}$.

\begin{table}[h]
\centering
\caption{Main software versions. Full \texttt{pip freeze} in supplementary.}
\label{tab:software}
\scriptsize
\setlength{\tabcolsep}{3pt}
\renewcommand{\arraystretch}{0.95}
\begin{tabular}{@{}ll@{}}
\toprule
\textbf{Component} & \textbf{Version} \\
\midrule
Python / PyTorch / CUDA          & 3.9.21 / 2.8.0 / cu128 \\
transformers / tokenizers        & 4.57.6 / 0.22.2 \\
accelerate / huggingface\_hub    & 1.10.1 / 0.36.2 \\
sentence-transformers            & 5.1.2 \\
torchaudio / torchvision         & 2.8.0 / 0.23.0 \\
numpy / pandas                   & 2.0.2 / 2.3.3 \\
librosa / panns\_inference       & 0.11.0 / 0.1.1 \\
rouge\_score / nltk              & 0.1.2 / 3.9.2 \\
openai (Azure GPT-5.5)           & 2.30.0 \\
google-genai (Gemini 3.5 Flash)  & 1.47.0 \\
anthropic (Claude Haiku 4.5)     & 0.42.0 \\
\bottomrule
\end{tabular}
\end{table}

\subsection{Reproducibility Assets}
\label{app:reproducibility}
The complete pipeline source, evaluation configs, SLURM submission
scripts, independent-metric scripts, SCS-1000 cue-pair manifest,
raw per-sample outputs, and plotting scripts are released. The entry point for
every experiment is
\begin{quote}
\texttt{python -m tiger.eval --config <yaml> --mode <mode>}
\end{quote}
where \texttt{<mode>} selects between \method{}, the ten baselines
listed in Table~\ref{tab:baselines} (Frozen, three BoN variants,
three contrastive-decoding variants, three iterative-refinement
variants), and the three internal ablations
(\texttt{tiger\_no\_gy}, \texttt{tiger\_no\_graph},
\texttt{tiger\_no\_lambda}) used in
Appendix~\ref{app:graph_ablation}.
Per-dataset YAMLs under \texttt{configs/experiment/} carry the values
listed in Table~\ref{tab:hparams}.

\section{Additional Experimental Results}
\label{app:additional}

\subsection{Cross-Backbone Generalization}
\label{app:api_backbones}

Table~\ref{tab:api_backbones_coco} reports additional COCO image$\to$text results on three proprietary API backbones. Together with the open-source LLaVA-1.5-7B in the main results and the primary Qwen2.5-Omni-7B, this gives five backbones in total. The results show that \method{} is not tied to a specific model family. On GPT-5.5, \method{} achieves the best score on all three metrics, reducing CHAIR$_s$ from .120 to .050 and improving BERTScore from .668 to .686. On Gemini 3.5 Flash, \method{} obtains the lowest CHAIR$_s$ and the highest BERTScore, which suggests that the method reduces hallucination without sacrificing semantic coverage. On Claude Haiku 4.5, \method{} again achieves the lowest CHAIR$_s$ and the highest BERTScore, while its CHAIR$_i$ remains close to the best text-feedback variant. These trends are consistent with the main-table results on Qwen2.5-Omni-7B and LLaVA-1.5-7B.

Table~\ref{tab:open_closed_backbones} summarizes the Frozen-to-\method{} change across all five backbones. \method{} reduces CHAIR$_s$ for every model, including both open-source and proprietary backbones. The relative reduction is 29\% on Qwen2.5-Omni-7B, 50\% on LLaVA-1.5-7B, 58\% on GPT-5.5, 67\% on Gemini 3.5 Flash, and 38\% on Claude Haiku 4.5. BERTScore also improves on all five models. This pattern is important because CHAIR measures unsupported object mentions, while BERTScore measures semantic coverage relative to human captions. The joint improvement indicates that \method{} does not lower hallucination by deleting useful content. Instead, the independent graph extraction and deterministic risk ranking provide a model-agnostic feedback signal that transfers across both open-source and proprietary backbones.

\begin{table*}[t]
\centering
\scriptsize
\setlength{\tabcolsep}{2.4pt}
\renewcommand{\arraystretch}{1.08}
\caption{COCO image$\to$text results on three proprietary API backbones. \textbf{Bold}: best per column within each backbone block. \colorbox{tigerblue}{Blue}: \method{}.}
\label{tab:api_backbones_coco}
\begin{tabular}{@{}l|ccc|ccc|ccc@{}}
\toprule
 & \multicolumn{3}{c|}{\textbf{GPT-5.5}} 
 & \multicolumn{3}{c|}{\textbf{Gemini 3.5 Flash}} 
 & \multicolumn{3}{c}{\textbf{Claude Haiku 4.5}} \\
\cmidrule(lr){2-4} \cmidrule(lr){5-7} \cmidrule(lr){8-10}
Method
& CHAIR$_s\downarrow$ & CHAIR$_i\downarrow$ & BERT$\uparrow$
& CHAIR$_s\downarrow$ & CHAIR$_i\downarrow$ & BERT$\uparrow$
& CHAIR$_s\downarrow$ & CHAIR$_i\downarrow$ & BERT$\uparrow$ \\
\midrule
Frozen
& .120$\pm$.010 & .070$\pm$.007 & .668$\pm$.003
& .030$\pm$.006 & .015$\pm$.003 & .497$\pm$.006
& .080$\pm$.009 & .036$\pm$.004 & .653$\pm$.003 \\

L1 Naive fb
& .120$\pm$.010 & .075$\pm$.007 & .672$\pm$.003
& .017$\pm$.000 & .010$\pm$.000 & .542$\pm$.008
& .105$\pm$.010 & .037$\pm$.007 & .660$\pm$.003 \\

L2 Text fb
& .100$\pm$.010 & .065$\pm$.007 & .669$\pm$.003
& .015$\pm$.004 & .006$\pm$.002 & .459$\pm$.010
& .051$\pm$.007 & .027$\pm$.003 & .663$\pm$.003 \\

Woodpecker
& .080$\pm$.009 & .050$\pm$.006 & .668$\pm$.003
& .011$\pm$.003 & .013$\pm$.002 & .581$\pm$.007
& .060$\pm$.007 & .031$\pm$.004 & .665$\pm$.003 \\

DeGF
& .090$\pm$.009 & .058$\pm$.006 & .673$\pm$.003
& .020$\pm$.004 & .015$\pm$.003 & .535$\pm$.006
& .070$\pm$.008 & .031$\pm$.004 & .661$\pm$.003 \\

Volcano
& .090$\pm$.009 & .053$\pm$.006 & .672$\pm$.003
& .025$\pm$.005 & .019$\pm$.004 & .550$\pm$.009
& .080$\pm$.009 & .046$\pm$.005 & .641$\pm$.003 \\

\rowcolor{tigerblue}
\method{}
& \textbf{.050$\pm$.007} & \textbf{.030$\pm$.004} & \textbf{.686$\pm$.003}
& \textbf{.010$\pm$.003} & \textbf{.005$\pm$.002} & \textbf{.630$\pm$.009}
& \textbf{.050$\pm$.007} & \textbf{.025$\pm$.004} & \textbf{.669$\pm$.003} \\
\bottomrule
\end{tabular}
\end{table*}

\begin{table*}[t]
\centering
\scriptsize
\setlength{\tabcolsep}{6pt}
\renewcommand{\arraystretch}{1.08}
\caption{Frozen-to-\method{} comparison across open-source and proprietary backbones on COCO image$\to$text.}
\label{tab:open_closed_backbones}
\begin{tabular}{@{}llccccc@{}}
\toprule
Backbone & Type
& Frozen CHAIR$_s$ & \method{} CHAIR$_s$ & $\Delta$
& Frozen BERT & \method{} BERT \\
\midrule
Qwen2.5-Omni-7B & Open
& .070 & \textbf{.050} & $-29\%$ & .588 & \textbf{.643} \\
LLaVA-1.5-7B & Open
& .060 & \textbf{.030} & $-50\%$ & .740 & \textbf{.772} \\
GPT-5.5 & Proprietary
& .120 & \textbf{.050} & $-58\%$ & .668 & \textbf{.686} \\
Gemini 3.5 Flash & Proprietary
& .030 & \textbf{.010} & $-67\%$ & .497 & \textbf{.630} \\
Claude Haiku 4.5 & Proprietary
& .080 & \textbf{.050} & $-38\%$ & .653 & \textbf{.669} \\
\bottomrule
\end{tabular}
\end{table*}

\subsection{Unified Configuration Across Benchmarks}
\label{app:unified}

The main paper uses a short-answer operating point for the two
question-answering benchmarks (MMHal-Bench and VideoHallucer) and a
batch-repair operating point for the three free-form benchmarks
(Table~\ref{tab:hparams}). To test whether the gains of \method{}
depend on this per-dataset choice, we re-ran all five benchmarks
under a single unified configuration: $T\!=\!5$, $\alpha\!=\!0.2$,
$\lambda\!=\!0.5$, $K\!=\!3$, $\gamma\!=\!0.7$, with the same
top-$\lceil\alpha N\rceil$ selection rule. For the short-answer
benchmarks, the claim graph is small, so this rule typically selects
one fact per round. Table~\ref{tab:unified} reports the results on
the primary metric of each benchmark.

\begin{table}[t]
\centering
\caption{\method{} under a single unified configuration
($T\!=\!5$, $\alpha\!=\!0.2$, $\lambda\!=\!0.5$, $K\!=\!3$,
$\gamma\!=\!0.7$) on the primary metric of each benchmark, compared
with the strongest baseline and the per-dataset operating point used
in the main paper. For COCO, AMBER, and Clotho, the main-paper
setting coincides with the unified setting.}
\label{tab:unified}
\footnotesize
\setlength{\tabcolsep}{3pt}
\renewcommand{\arraystretch}{1.15}
\resizebox{\columnwidth}{!}{%
\begin{tabular}{@{}llcccc@{}}
\toprule
Benchmark & Metric & Frozen & Best bsl. & Unified & Paper \\
\midrule
COCO          & CHAIR$_s\downarrow$    & .070 & .060 & .050 & .050 \\
AMBER         & CHAIR$_g\downarrow$    & 5.0  & 4.8  & 4.3  & 4.3  \\
MMHal-Bench   & BERT$\uparrow$         & .703 & .742 & .753 & .756 \\
Clotho        & AEHR$\downarrow$       & .803 & .777 & .757 & .757 \\
VideoHallucer & HallucRate$\downarrow$ & .015 & .015 & .012 & .010 \\
\bottomrule
\end{tabular}}
\end{table}

For MMHal-Bench and VideoHallucer, the unified setting remains close
to the per-dataset setting and stays better than the strongest
baseline on the primary metric. This result indicates that the gains
of \method{} do not require dataset-specific hyperparameter search in
the settings we study, although tuning may help in new domains.

We also repeated the sensitivity analysis of
Section~\ref{subsec:sensitivity} on Clotho, which tests whether the
trend observed on image inputs transfers to the audio path.
Table~\ref{tab:clotho-sweep} reports AEHR under one-at-a-time sweeps
of $\alpha$ and $T$. The Clotho sweep is consistent with the COCO
trend: AEHR improves over the first repair rounds and remains low
near the default setting, and the $\alpha$ sweep is stable in the
range $0.2$--$0.5$.

\begin{table}[t]
\centering
\caption{Sensitivity of \method{} on Clotho (audio$\to$text). Each
sweep varies one hyperparameter and keeps the other fixed at the
default setting ($T\!=\!5$, $\alpha\!=\!0.2$).}
\label{tab:clotho-sweep}
\footnotesize
\setlength{\tabcolsep}{6pt}
\renewcommand{\arraystretch}{1.15}
\begin{tabular}{@{}lc@{}}
\toprule
Setting & AEHR$\downarrow$ \\
\midrule
\multicolumn{2}{@{}l}{\emph{Sweep over $\alpha$ ($T\!=\!5$ fixed)}} \\
\quad $\alpha=0$ (single-fact) & .769 \\
\quad $\alpha=0.1$             & .781 \\
\quad $\alpha=0.2$ (default)   & .757 \\
\quad $\alpha=0.3$             & .752 \\
\quad $\alpha=0.5$             & .758 \\
\midrule
\multicolumn{2}{@{}l}{\emph{Sweep over $T$ ($\alpha\!=\!0.2$ fixed)}} \\
\quad $T=1$           & .803 \\
\quad $T=2$           & .806 \\
\quad $T=3$           & .785 \\
\quad $T=5$ (default) & .757 \\
\quad $T=7$           & .786 \\
\bottomrule
\end{tabular}
\end{table}

\subsection{Graph Coverage Analysis}
\label{app:coverage}

The triple representation of \method{} targets local factual content
such as objects, attributes, relations, and counts. This analysis
quantifies how much of the generated content this representation
covers, and whether the gains of \method{} depend on high coverage.

We sampled outputs from COCO, MMHal-Bench, and Clotho and decomposed
each output into atomic factual claims with Gemini 3.5 Flash,
followed by a 20\% human audit of the decomposition and labels. Each
claim is labeled by whether it can be represented as a single triple
or as multiple coreference-linked triples, following the structured
fact view of SPICE~\citep{anderson2016spice}, which matches the graph
structure of \method{}. Table~\ref{tab:coverage-types} reports the
distribution. Most claims are graph-representable, which suggests
that \method{} covers much of the local factual content it targets,
while abstract, subjective, or causal claims account for
9.4--14.1\%.

\begin{table}[t]
\centering
\caption{Fraction of atomic claims that the triple representation
can express. ``Linked'' denotes claims that require multiple
coreference-linked triples; ``Temp./n-ary'' denotes temporal or
n-ary claims; ``Abstract'' denotes abstract, subjective, or causal
claims.}
\label{tab:coverage-types}
\footnotesize
\setlength{\tabcolsep}{3pt}
\renewcommand{\arraystretch}{1.15}
\resizebox{\columnwidth}{!}{%
\begin{tabular}{@{}lcccccc@{}}
\toprule
Dataset & \#Claims & Repr.\ (\%) & Single & Linked & Temp./n-ary & Abstract \\
\midrule
COCO        & 637 & 86.7 & 41.9 & 44.7 & 3.9  & 9.4  \\
MMHal-Bench & 321 & 78.1 & 48.0 & 30.1 & 9.3  & 13.6 \\
Clotho      & 199 & 74.4 & 25.6 & 48.7 & 11.6 & 14.1 \\
\bottomrule
\end{tabular}}
\end{table}

We then test whether performance drops when coverage is lower. For
each annotated COCO sample, we compute coverage from the Frozen
output to avoid post-treatment bias, and group the 96 annotated
samples into three coverage buckets.
Table~\ref{tab:coverage-buckets} shows that \method{} improves over
Frozen in every bucket, including the lowest-coverage bucket. This
result does not imply that triples can represent all content; it
shows that the coverage limitation is measurable and does not
reverse the average gain on the annotated samples.

\begin{table}[t]
\centering
\caption{\method{} gains under different levels of graph coverage on
COCO. Coverage is the fraction of claims in the Frozen output that
are graph-representable; $n$ is the number of annotated samples per
bucket.}
\label{tab:coverage-buckets}
\footnotesize
\setlength{\tabcolsep}{4pt}
\renewcommand{\arraystretch}{1.15}
\resizebox{\columnwidth}{!}{%
\begin{tabular}{@{}lccccc@{}}
\toprule
Coverage & $n$ & CHAIR$_s$ Frozen & CHAIR$_s$ \method{} & BERT Frozen & BERT \method{} \\
\midrule
100\%       & 43 & .070 & .047 & .626 & .638 \\
80--100\%   & 20 & .080 & .053 & .660 & .676 \\
$<$80\%     & 33 & .091 & .061 & .627 & .665 \\
\bottomrule
\end{tabular}}
\end{table}

\subsection{Robustness to Extraction Perturbation}
\label{app:perturb}

\method{} depends on the extracted input graph $G_{\mathbf{X}}$ to
rank which claims should be re-examined, while the refinement step
still conditions on the raw input $\mathbf{X}$ at every round. This
analysis tests how much extraction noise the repair loop can absorb.
We perturb $G_{\mathbf{X}}$ by deleting or injecting a fixed
fraction of facts while keeping the rest of the pipeline unchanged,
and report COCO results in Table~\ref{tab:perturb}.

\begin{table}[t]
\centering
\caption{Robustness of \method{} to perturbation of the input graph
$G_{\mathbf{X}}$ on COCO. Deletion removes a random fraction of
extracted facts; injection adds the same fraction of facts sampled
from other images.}
\label{tab:perturb}
\footnotesize
\setlength{\tabcolsep}{6pt}
\renewcommand{\arraystretch}{1.15}
\begin{tabular}{@{}lcc@{}}
\toprule
Perturbation on $G_{\mathbf{X}}$ & CHAIR$_s\downarrow$ & BERT$\uparrow$ \\
\midrule
Frozen (no repair) & .070 & .588 \\
\method{} (no perturbation) & .050 & .643 \\
\quad Delete 10\% & .052 & .640 \\
\quad Delete 30\% & .054 & .638 \\
\quad Inject 10\% & .053 & .636 \\
\quad Inject 30\% & .053 & .637 \\
\bottomrule
\end{tabular}
\end{table}

Performance degrades gradually and remains better than Frozen even
under 30\% deletion. Two design choices help absorb extraction
noise. First, support propagates along coreference edges
(Eq.~\eqref{eq:support}), which can reduce false flags when a
direct match is missing. Second, the refinement step verifies
selected claims against the raw input, so the repair loop can
recover facts that the perturbed graph does not contain, which is
consistent with the recovery analysis in
Section~\ref{subsec:mechanism}. In Theorem~\ref{thm:convergence},
extraction error appears through the residual term $\xi$, which
raises the asymptotic floor rather than changing the multiplicative
decay term.

\subsection{Comparison with Deterministic Modality-Specific Extractors}
\label{app:extractor_variants}

\method{} uses the same frozen backbone $\Phi$ as both the generator
and the fact extractor (Eq.~\ref{eq:extract}). A natural alternative
is to replace $\Phi$ in the extractor role with a deterministic
modality-specific tool that does not autoregress, e.g.,
Grounding~DINO for object detection, PANNs CNN14 for audio event
tagging, or a generic dependency parser for text. To
test whether this design choice matters, we re-ran COCO image$\to$text
with $G_{\mathbf{X}}$ produced by Grounding~DINO instead of
Qwen2.5-Omni-7B, keeping the rest of the pipeline (risk function,
$\Psi_\alpha$ selection, refine step) fixed.

\begin{table}[h]
\centering
\scriptsize
\setlength{\tabcolsep}{3pt}
\caption{Effect of replacing the multimodal-backbone extractor with a
deterministic detector (Grounding~DINO) on COCO with
Qwen2.5-Omni-7B. The detector reduces extraction noise on common COCO
classes but discards attributes, spatial relations, and counts that
the backbone-extractor captures.}
\label{tab:extractor_variants}
\begin{tabular}{@{}lccc@{}}
\toprule
Extractor for $G_{\mathbf{X}}$ & CHAIR$_s\!\downarrow$ & CHAIR$_i\!\downarrow$ & BERT$\uparrow$ \\
\midrule
Grounding~DINO (objects only)  & .055 & .040 & .615 \\
\rowcolor{tigerblue}
Qwen2.5-Omni-7B (\method{} default) & \textbf{.050} & \textbf{.035} & \textbf{.643} \\
\bottomrule
\end{tabular}
\end{table}

Grounding~DINO reduces CHAIR$_s$ to $0.055$, close to the default
$\Phi$-extractor's $0.050$, because it is highly reliable on the
80~COCO object classes. However, BERTScore drops from $0.643$ to
$0.615$: a detection-only $G_{\mathbf{X}}$ contains no attributes
(``red shirt''), no spatial relations (``man riding skateboard''),
and no counts, so the refine step has nothing to anchor when it
checks non-object claims. The same trade-off appears for the audio
and video paths: PANNs covers the AudioSet ontology but discards
acoustic-event ordering, and a generic dependency parser captures
syntactic structure but not entity coreference across sentences.
For the multimodal paths considered in this paper, using $\Phi$
itself as the extractor yields a richer $G_{\mathbf{X}}$ at the cost
of a per-sample extraction call, which we judge worthwhile.

\subsection{Category-Level Diagnostic on MMHal-Bench}
\label{app:mmhal-cat}

MMHal-Bench groups its 96 questions into eight categories. Because
the benchmark is small, we report directional deltas rather than
strong claims. Table~\ref{tab:mmhal-cat} shows that \method{}
improves six of eight categories, including relation and comparison,
while attribute and counting decrease slightly. This diagnostic
indicates that \method{} is not limited to object-existence errors,
although abstract multi-step reasoning remains a limitation.

\begin{table}[t]
\centering
\caption{Per-category BERTScore on MMHal-Bench (96 questions).}
\label{tab:mmhal-cat}
\footnotesize
\setlength{\tabcolsep}{6pt}
\renewcommand{\arraystretch}{1.15}
\begin{tabular}{@{}lccc@{}}
\toprule
Category & Frozen & \method{} & $\Delta$ \\
\midrule
environment & .541 & .645 & $+$.104 \\
adversarial & .598 & .681 & $+$.083 \\
holistic    & .672 & .743 & $+$.071 \\
other       & .731 & .796 & $+$.065 \\
comparison  & .747 & .808 & $+$.061 \\
relation    & .759 & .811 & $+$.052 \\
attribute   & .796 & .778 & $-$.018 \\
counting    & .788 & .774 & $-$.014 \\
\midrule
Macro avg.  & .704 & .755 & $+$.051 \\
\bottomrule
\end{tabular}
\end{table}

\subsection{Graph Structure and Risk-Function Ablation}
\label{app:graph_ablation}

The L0--L3 ablation in Section~\ref{subsec:ablation} varies the
feedback \emph{paradigm} (no repair $\to$ joint feedback $\to$
atomic projection $\to$ graph-conditioned risk ranking). Here we hold
the paradigm fixed at L3 (full \method{}) and instead remove one
\emph{internal} component at a time, to localize which piece of the
graph machinery contributes to the final score.
Table~\ref{tab:graph_ablation} reports CHAIR$_s$, CHAIR$_i$, and
BERTScore on COCO val2014 under Qwen2.5-Omni-7B for three ablations
of full \method{}:
\textbf{$-G_{\mathbf{Y}}$ (direct rewrite)} replaces fact-level
risk-driven repair with a single evidence-conditioned rewrite: the
backbone receives all $G_{\mathbf{X}}$ facts and rewrites the entire
output without constructing $G_{\mathbf{Y}_t}$;
\textbf{$-$graph (flat+repair)} sets $K\!=\!0$ in
Eq.~\eqref{eq:graph_support}, so $s(f) = s_0(f)$ with no coreference
propagation; \textbf{$-\lambda$} ($\lambda\!=\!0$) zeros out the
conflict term in the risk function so $r(f)=1-s(f)$. The Frozen
row (the L0 baseline in Figure~\ref{fig:ablation}) is included for
reference.

\begin{table}[h]
\centering
\scriptsize
\setlength{\tabcolsep}{3pt}
\renewcommand{\arraystretch}{1.0}
\caption{Internal-component ablation on COCO. Each row removes one
component from full \method{}.}
\label{tab:graph_ablation}
\begin{tabular}{@{}lccc@{}}
\toprule
Config & CHAIR$_s\!\downarrow$ & CHAIR$_i\!\downarrow$ & BERT$\uparrow$ \\
\midrule
Frozen (L0)                    & .070$_{\pm.008}$ & .069$_{\pm.006}$ & .588$_{\pm.003}$ \\
$-G_{\mathbf{Y}}$ (direct rw.) & .065$_{\pm.008}$ & .055$_{\pm.006}$ & .600$_{\pm.003}$ \\
$-$graph ($K{=}0$)             & .063$_{\pm.008}$ & .051$_{\pm.006}$ & .610$_{\pm.003}$ \\
$-\lambda$ ($\lambda{=}0$)    & .058$_{\pm.007}$ & .042$_{\pm.005}$ & .628$_{\pm.003}$ \\
\rowcolor{tigerblue}
Full (\method{})               & \textbf{.050}$_{\pm.007}$ & \textbf{.035}$_{\pm.005}$ & \textbf{.643}$_{\pm.003}$ \\
\bottomrule
\end{tabular}
\end{table}

Three findings stand out. First, every ablation lies between full
\method{} and Frozen on all three metrics, which confirms that the
three components are jointly necessary: removing any single one
gives up a fraction of the \method{}-over-Frozen gap but does not
collapse the repair loop. Second, the ablations order cleanly by
the component they remove. The $-\lambda$ variant ($\lambda\!=\!0$,
$r(f)\!=\!1-s(f)$) is closest to full \method{}; without the
conflict term the selector can no longer separate weakly-supported
claims from actively contradicted ones, but most flagged facts are
still correctly low-support and the repair loop remains useful.
The $-$graph variant ($K\!=\!0$, no coreference propagation) sits
slightly further from \method{}; without propagation the support
score uses only direct matches, so facts that are supported only
indirectly through a shared entity get over-flagged and revised
away. Removing the claim graph $G_{\mathbf{Y}}$ entirely is the
strongest single ablation: without $G_{\mathbf{Y}}$ the refine step
has no per-claim risk to act on and falls back to a single
evidence-conditioned rewrite that cannot localize edits. Third,
none of the three single-component ablations matches the joint
$L_0\!\to\!L_1$ regression in Figure~\ref{fig:ablation}, where the
full feedback paradigm is changed from atomic-with-risk to
joint-conditioning. This indicates that the paradigm choice
(atomic projection + risk ranking, taken as a whole) is the dominant
source of \method{}'s gain over self-correction baselines, while
each internal component within that paradigm contributes a smaller improvement.

\end{document}